\DeclareMathAlphabet{\mathcal}{OMS}{cmsy}{m}{n}
\newtheorem{theorem}{Theorem}
\newtheorem{lemma}{Lemma}
\newtheorem{proposition}{Proposition}
\theoremstyle{definition}
\newtheorem{definition}{Definition}
\theoremstyle{remark}
\newtheorem{remark}{Remark}
\newcommand{\R}{\mathbb{R}}
\newcommand{\Hcal}{\mathcal{H}}
\newcommand{\calG}{\mathcal{G}}
\DeclareMathOperator{\diag}{diag}
\title{ Tree-Preconditioned Differentiable Optimization and Axioms as Layers}
\author{
  Yuexin Liao\thanks{
    Division of Physics, Mathematics and Astronomy (PMA),  
    California Institute of Technology.  
    \textsf{yliao@caltech.edu}. 
     Yuexin acknowledges financial support from the Guo \& Zhao Family Research Fellowship 
    and the SURF program at Caltech (2025).
  }
}
\date{October 2025}
\begin{document}

\maketitle
\tableofcontents
\newpage   
\begin{abstract}
We introduce a differentiable framework that embeds the axiomatic structure of Random Utility Models (RUM) directly into deep neural networks. 
Addressing the computational intractability of projecting onto the RUM polytope, defined by exponentially many Block-Marschak inequalities, we establish an isomorphism between RUM consistency and network flow conservation on the Boolean lattice. 
Leveraging this combinatorial structure, we derive a novel Tree-Preconditioned Conjugate Gradient solver. By exploiting the spanning tree of the constraint graph, our preconditioner effectively ``whitens'' the ill-conditioned Hessian spectrum induced by the Interior Point Method barrier, achieving superlinear convergence and scaling to problem sizes,
 previously deemed unsolvable.
We further formulate the projection as a differentiable layer via the Implicit Function Theorem, where the exact Jacobian propagates geometric constraints during backpropagation.
Empirical results demonstrate that this ``Axioms-as-Layers'' paradigm eliminates the structural overfitting inherent in penalty-based methods, enabling models that are jointly trainable, provably rational, and capable of generalizing from sparse data regimes where standard approximations fail.
\end{abstract}
\section{Introduction}

While neural networks excel at high-dimensional function approximation, they fundamentally lack the ability to adhere to structural axioms—such as rationality in economics or conservation laws in physics—unless these constraints are explicitly enforced. Random Utility Models (RUMs) provide the geometric representation of first-principles rationality in choice modeling, expressing choice probabilities as a convex combination of deterministic utility-maximizing rules. This framework forms the theoretical cornerstone of travel demand modeling \cite{McFadden1974,McFadden2000}, product pricing \cite{Berry1995}, and stochastic response modeling in socio-technical infrastructures \cite{Daina2017,Daziano2022}. However, projecting raw neural predictions onto the RUM polytope has historically been computationally intractable beyond trivial scales ($n \approx 5$ alternatives) due to the combinatorial explosion of the defining constraints \cite{KitamuraStoye2018}.

The computational intractability of the RUM projection problem has historically been attributed to its theoretical hardness. The general membership problem is known to be NP-hard\cite{KitamuraStoye2018}. Previous computational approaches predominantly rely on the \textit{vertex representation}, modeling the RUM polytope as the convex hull of $n!$ deterministic preference orderings. This factorial explosion inevitably forces reliance on heuristic approximations or column-generation methods that lack convergence guarantees. We propose a fundamental paradigm shift: characterizing the geometry via its \textit{hyperplane representation}. By enforcing the Block--Marschak inequalities directly, we recast the projection as a Network Quadratic Program (QP). While hyperplane representation bypasses the factorial bottleneck, it exposes a distinct numerical challenge: preconditioning the resulting KKT system. 

In fact, network quadratic programs are ubiquitous in network flow control and statistical estimation \cite{Ahuja1993,Low2014a,Low2014b}. Classical large-scale solvers combine Interior-Point Methods (IPMs) with Krylov subspace methods, whose scalability hinges on preconditioning the KKT system \cite{Gondzio2012,Benzi2005}. Yet, a critical gap exists: while nearly-linear time solvers have been established for Laplacian/SDD systems \cite{SpielmanTeng2014,KoutisMillerPeng2014}, the Schur complement arising from RUM constraints creates a non-Laplacian system with dense, indefinite correlations that defy standard graph-theoretic preconditioners.

We resolve this bottleneck by establishing an isomorphism between the RUM feasibility set and a flow network. We show that after a Block--Marschak M\"obius transform, RUM consistency is equivalent to (i) non-negativity of BM polynomials and (ii) flow conservation on the \textit{Boolean lattice} (the Hasse diagram of subsets). In this representation, the classical normalization constraints become a unit total flow condition. Crucially, by working in a reduced face coordinate system, we eliminate linear equalities entirely, yielding a Symmetric Positive Definite (SPD) Schur complement $\Hcal$ in the IPM:
\[
\Hcal \;=\; B^\top B \;+\; (KB)^\top D\,(KB),\qquad D=\diag(S^{-1}\Lambda),
\]
where $B$ encodes the reduced coordinates, $K$ is the BM transform, and $D$ is the IPM scaling matrix.

\textbf{The Tree-Preconditioner.} To solve systems involving $\Hcal$ efficiently, we exploit the discrete Boolean-lattice flow structure to construct a novel preconditioner $M$. Unlike generic algebraic preconditioners, $M$ is derived explicitly from the combinatorial skeleton of the constraints:
\[
M \;=\; A_m^\top\,\max(D_m,I)\,A_m.
\]
Here, $A_m$ is a square submatrix of $KB$ and $D_m$ captures the dominant entries of the scaling matrix $D$. We prove a graph-theoretic invertibility criterion: \emph{$A_m$ is invertible if and only if the complement of its index set forms a spanning tree in the Boolean lattice.} This insight allows us to replace expensive matrix factorizations with linear-time tree traversals—specifically, forward flow extension for $A_m^{-1}$ and reverse adjoint distribution for $A_m^{-\top}$. Empirically, this topological preconditioning sharply compresses the spectrum of $\Hcal$, yielding a $100\times$ reduction in IPM iterations, with each preconditioned CG iteration costing only $\sim 5\times$ more than the unpreconditioned case.

\textbf{Our Contributions.} By exposing this hidden network structure, we introduce a differentiable framework that makes rationality axioms first-class citizens in deep learning. Specifically:

\begin{enumerate}[label=(\alph*)]
    \item \textbf{A Structure-Exploiting Preconditioner for Non-Laplacian Systems.} We deliver solutions to the RUM nearest-point problem up to $n=20$ alternatives ($N > 10^7 $). We provide, to our knowledge, the first tree/co-tree preconditioner tailored to the non-Laplacian Schur complement of the BM-flow, complete with a graph-theoretic invertibility certificate. This moves computational practice far beyond the previous feasibility–testing regime at $n \approx 5$, enlarging the tractable scale by more than an order of magnitude.

    \item \textbf{Axioms as Differentiable Layers.} We utilize the implicit function theorem to embed the solver as a high-throughput differentiable layer within deep neural networks. Crucially, we cast the converged Schur complement $\Hcal$ as the \emph{common linear operator} for both the forward projection and the backward gradient computation $\Hcal w = \nabla_{\xi^\star}\mathcal{J}$. This unified numerical core delivers a numerically stable, high-throughput layer that enforces rationality by construction, enabling models that are jointly trainable, trustworthy, and interpretable \cite{AmosKolter2017,AgrawalEtAl2019}.

    \item \textbf{Training-Time Numerics: Lazy Updates and Warm-Starts.} We optimize the layer for the deep learning loop by implementing predictor-corrector warm-starts for the inner CG solves and triggering preconditioner rebuilds only under multiplicative changes in $\log D$ or iteration blow-ups. This ensures stable gradients and high throughput in batched training settings.

    \item \textbf{Low-Rank Acceleration via Incomplete Data.} Counter-intuitively, we demonstrate that \textit{missing data accelerates computation}. We show that the solver's complexity is governed by the effective rank of the observation manifold rather than the ambient dimension. The algorithm naturally exploits the low-rank structure of sparse datasets, making it transferable to real-world settings where full choice data is unavailable.
    \item \textbf{Empirical Validation.}  We substantiate our theoretical claims through numerical experiments.
First, using a ``frozen barrier'' protocol, we provide empirical evidence of spectral clustering,
showing that our preconditioner compresses the eigenvalues of the ill-conditioned Hessian
into a compact cluster, enabling superlinear convergence where standard diagonal scaling fails.
Second, in a static stress test with barrier weights spanning eight orders of magnitude ($\kappa \approx 10^8$), we demonstrate that our tree-preconditioner achieves superlinear convergence where standard diagonal scaling (Jacobi) stagnates, empirically proving effective spectral whitening. Third, in a teacher-student learning setup, we demonstrate that our differentiable layer eliminates
structural overfitting. Unlike soft-penalty baselines which fail to satisfy axioms on unseen data,
our approach acts as a strong inductive bias, achieving machine-precision feasibility ($\epsilon \approx 10^{-16}$)
and superior generalization in data-starved regimes.
\end{enumerate}

\section{Predictor-corrector Interior Point Method for Solving the Optimization Problem}
\subsection{Problem Setting}
\begin{itemize}
    \item $X =\{ 0, 1, \dots, n-1 \}$ is the set of all alternatives.
    \item  Let  $N \;:=\; \sum_{D\subseteq X}|D|\;=\;n\,2^{\,n-1}$ denote the number of ordered pairs $(D,x)$ with $x\in D\subseteq X$.
Define the \emph{primal space}
$\mathcal P_\rho\cong\R^{N}$ whose coordinates are
$\rho(D,x)$, the probability of choosing $x$ from $D$.
    \item Let $\mathcal{B} =\{ (D,x): \emptyset \neq D \subseteq X, x \in D, x \neq \max D \}$.
\end{itemize}

BM (Möbius) operator\; 
$ K:\mathcal P_\rho\to\mathcal P_\kappa$

\begin{equation}\label{eq:mobius}
  (\mathcal K\rho)(D,x)
  \;=\;
  \sum_{E\supseteq D}(-1)^{\,|E\setminus D|}\rho(E,x).
\end{equation}
  
Zeta operator $\ K^{-1}:\mathcal P_\kappa\to\mathcal P_\rho$
\begin{equation}\label{eq:zeta}
  (\mathcal K^{-1}\kappa)(D,x)
  \;=\;
  \sum_{E\supseteq D}\kappa(E,x).
\end{equation}

Both maps are linear and mutually inverse; hence each is represented by an
invertible lower-triangular, totally unimodular matrix
$K\in\R^{N\times N}$.

Define a linear operator $B : \mathbb{R}^{\mathcal{B}} \rightarrow \mathbb{R}^N$ as follows:
for a vector $\xi : \mathcal{B} \rightarrow \mathbb{R}$,
\begin{equation}
    B \xi (D,x) := \begin{cases}
        \xi (D,x), & \text{ if } x \neq \max D\\
        - \sum_{y \in D, y \neq \max D} \xi (D,y), & \text{ if } x= \max D,
    \end{cases}
\end{equation}
where $\emptyset \neq D \subseteq X,\, x \in D$.
$(\mathcal{K}\rho)(D,x) = \sum_{E\supseteq D}(-1)^{\,|E\setminus D|}\rho(E,x)$
Define a linear operator $R: \mathbb{R}^N \rightarrow \mathbb{R}^{\mathcal{B}}$ as follows: for a vector $\rho \in \mathbb{R}^N$,
\begin{equation}
    R \rho (D,x) := \rho (D,x),
\end{equation}
where $\emptyset \neq D \subseteq X,\, x \in D,\, x \neq \max D$.

Define $u \in \mathbb{R}^N$ as
\begin{equation}
    u (D,x) = \begin{cases}
        1, & \text{ if } x =\max D,\\
        0, & \text{ if } x \neq \max D,
    \end{cases}
\end{equation}
where $\emptyset \neq D \subseteq X,\, x \in D$.

Then for a choice probability vector $\rho \in \mathbb{R}^N$,
\begin{equation}
    \rho =BR\rho +u.
\end{equation}
The problem is to project any choice vector $\rho$ to the nearest point on random utility model and seek their minimum $L_2$ distance.
The corresponding optimization problem is the quadratic programming with equality and inequality constrains
\begin{equation}
    \min (\rho -\hat{\rho})^T (\rho -\hat{\rho}) \text{ s.t. } C \rho =1,\, K \rho \geq 0
\end{equation}

\subsection{Problem Formulation}
Let $\xi =R \rho$, we can transform the above problem into a new formulation:
\begin{equation}
    \min (\rho -\hat{\rho})^T (\rho -\hat{\rho}) \text{ s.t. } C \rho =1,\, K \rho \geq 0
\end{equation} is equivalent to the following quadratic programming with only inequality constrains
\begin{equation}
    \min (B \xi +u -\hat{\rho})^T (B \xi +u -\hat{\rho}) \text{ s.t. } K(B \xi +u) \geq 0
\end{equation}
because $B \xi +u$ automatically satisfies $C (B \xi +u) =1$. By scaling and canceling some constant, we get the following quadratic programming problem:
\begin{equation}\label{00002ii3}
    \min \frac{1}{2} \xi^T B^T B \xi + (B^T u -B^T \hat{\rho})^T \xi \text{ s.t. } KB \xi +Ku \geq 0
\end{equation}

By definition of $B$, the linear operator $B^T : \mathbb{R}^N \rightarrow \mathbb{R}^\mathcal{B}$ is as
\begin{equation}
    B^T \rho (D,x) =\rho (D,x) -\rho (D, \max D),
\end{equation}
where $\rho \in \mathbb{R}^N,\, \emptyset \neq D \subseteq X,\, x \in D,\, x \neq \max D$. Thus, $B^T u =- \mathbf{1}$.

Let $b =Ku$. Since $u$ can be seen as the deterministic choice vector with the preference order $0 \prec 1 \prec \cdots \prec n-1$. Then $b= Ku$ is the BM polynomial vector with this preference order, which is a flow on the path from $X$ to $\emptyset$ by ruling out the best alternative each time. Thus,
\begin{equation}
    b (D,x) =\begin{cases}
        1, &\text{ if } D =\{ 0, 1, \dots, x\},\\
        0, &\text{ otherwise.}
    \end{cases}
\end{equation}

Hence \eqref{00002ii3} can be written as
\begin{equation}\label{wced}
    \min \frac{1}{2} \xi^T B^T B \xi - (B^T \hat{\rho} +\mathbf{1})^T \xi \text{ s.t. } KB \xi +b \geq 0.
\end{equation}
This posesses the form of quadratic programming problem in [Numerical Optimization, Wright, (16.54)], so we apply the predictor-corrector interior point method [Numerical Optimization, Wright, Algorithm 16.4]:
\\
Choose $\xi_0 \in \mathbb{R}^\mathcal{B}$ with $KB \xi_0 +b \geq 0$, $s_0 \in \mathbb{R}_+^N$, and $\lambda_0 \in \mathbb{R}_+^N$.
\\
For $k=0, 1, 2, \dots$,

let $S_k =diag (s_k)$ and $\Lambda_k =diag (\lambda_k)$.

Solve
\begin{equation}\label{ef999su}
\begin{pmatrix}
B^T B & 0 & -B^T K^T \\
KB & -I & 0 \\
0 & \Lambda_k & S_k 
\end{pmatrix}
\begin{pmatrix}
\Delta \xi^{\text{aff}} \\
\Delta s^{\text{aff}} \\
\Delta \lambda^{\text{aff}} 
\end{pmatrix} =\begin{pmatrix}
-B^T B \xi_k +B^T K^T \lambda_k +B^T \hat{\rho} +\mathbf{1} \\
-KB \xi_k +s_k -b \\
- \Lambda_k S_k \mathbf{1}
\end{pmatrix}.
\end{equation}

Calculate $\mu =s_k^T \lambda_k /N$.

Calculate $\hat{\alpha}_{\text{aff}} =\max \{ \alpha \in (0,1] : (s_k, \lambda_k) +\alpha (\Delta s^{\text{aff}}, \Delta \lambda^{\text{aff}}) \geq 0 \}$

Calculate $\mu_{\text{aff}} =(s_k +\hat{\alpha}_{\text{aff}} \Delta s^{\text{aff}})^T (\lambda_k +\hat{\alpha}_{\text{aff}} \Delta \lambda^{\text{aff}}) /N$

Calculate $\sigma =(\mu_{\text{aff}} /\mu)^3$

Solve
\begin{equation}\label{ef999su2}
\begin{pmatrix}
B^T B & 0 & -B^T K^T \\
KB & -I & 0 \\
0 & \Lambda_k & S_k 
\end{pmatrix}
\begin{pmatrix}
\Delta \xi \\
\Delta s \\
\Delta \lambda
\end{pmatrix} =\begin{pmatrix}
-B^T B \xi_k +B^T K^T \lambda_k +B^T \hat{\rho} +\mathbf{1} \\
-KB \xi_k +s_k -b \\
- \Lambda_k S_k \mathbf{1} -\Delta \Lambda^{\text{aff}} \Delta S^{\text{aff}} \mathbf{1} +\sigma \mu \mathbf{1}
\end{pmatrix},
\end{equation}
where $\Delta \Lambda^{\text{aff}} =diag (\Delta \lambda^{\text{aff}})$ and $\Delta S^{\text{aff}} =diag (\Delta s^{\text{aff}})$.

Choose $\tau_k \in (0,1)$ (we choose $\tau_k =0.995$ currently).

Calculate $\hat{\alpha} =\max \{ \alpha \in (0,1] : (s_k, \lambda_k) +\alpha (\Delta s, \Delta \lambda) \geq (1- \tau_k) (s_k, \lambda_k) \}$.

Set $(\xi_{k+1}, s_{k+1}, \lambda_{k+1}) =(\xi_k, s_k, \lambda_k) +\hat{\alpha} (\Delta \xi, \Delta s, \Delta \lambda)$.
\\
End for.

\subsection{Solving the linear equations}

The hardest step in the above algorithm is to solve \eqref{ef999su} and \eqref{ef999su2}. They are both of the form
\begin{equation}\label{genne}
\begin{pmatrix}
B^T B & 0 & -B^T K^T \\
KB & -I & 0 \\
0 & \Lambda_k & S_k 
\end{pmatrix}
\begin{pmatrix}
\Delta \xi \\
\Delta s \\
\Delta \lambda
\end{pmatrix} =\begin{pmatrix}
b_1 \\
b_2 \\
b_3
\end{pmatrix}.
\end{equation}

The 2nd row is
\begin{equation}
    KB \Delta \xi -\Delta s =b_2,
\end{equation}
which implies
\begin{equation}\label{8r8e}
    \Delta s =KB \Delta \xi -b_2.
\end{equation}

The 3rd row is
\begin{equation}
    \Lambda_k \Delta s +S_k \Delta \lambda =b_3,
\end{equation}
which implies
\begin{equation}\label{2rh9wh}
    \Delta \lambda =S_k^{-1} (b_3 -\Lambda_k \Delta s) =S_k^{-1} (b_3 -\Lambda_k (KB \Delta \xi -b_2)).
\end{equation}
Substituting \eqref{2rh9wh} into the 1st row we get
\begin{equation}\label{w99adfsv}
    (B^T B +B^T K^T S_k^{-1} \Lambda_k KB) \Delta \xi =b_1 +B^T K^T S_k^{-1} (b_3 +\Lambda_k b_2).
\end{equation}

The coefficient matrix $H  := B^T B +B^T K^T S_k^{-1} \Lambda_k KB$ of \eqref{w99adfsv} is positive definite symmetric, so we can use CG method to solve \eqref{w99adfsv}, and then use \eqref{8r8e} and \eqref{2rh9wh} to calculate the solution of \eqref{genne}.
\\
\\

\textbf{Remark:}

\begin{itemize}
    \item CG is a iterative method to approach the solution to positive definite symmetric linear equations like \eqref{w99adfsv}. Without preconditioners, the time of each iteration is slightly longer than the time of an $H$ operation ($v \mapsto Hv$). The number of iterations we need depends on the accuracy of the solution we need and the convergence rate. From [Numerical Optimization, Wright, Section 5.1], roughly speaking, the more clustered the distribution of all eigenvalues of $H$ is, the faster CG converges. [Numerical Optimization, Wright, Section (5.36)] gives an ``often overestimated'' estimation of the convergence rate:
    \begin{equation}\label{rarf}
        \| x_k -x^* \|_H \leq 2 \left( \frac{\sqrt{\kappa (H)} -1}{\sqrt{\kappa (H)} +1} \right)^k \| x_0 -x^* \|_H,
    \end{equation}
    where $x^*$ is the exact solution, $x_0$ is the initial conjectured solution, $x_k$ is the result after $k$ CG iterations, $\| x \|_H := x^T H x$, and $\kappa (H)$ is the condition number of $H$, defined as $\lambda_1 (H)/ \lambda_n (H)$, where $\lambda_1$ and $\lambda_n$ are the largest and the smallest eigenvalues of $H$.
    \item The reason why we work on a reduced space $\mathbb{R}^\mathcal{B}$ and cancel the equality constrains is to make the linear equation positive definite and thus be able to use CG to solve. If we keep the equality constrains, the coefficient of the linear equation will be
    \begin{equation}\label{wefhoi}
    \begin{pmatrix}
    I + K^T D K & C^T \\
    C & 0
    \end{pmatrix}
    \end{equation}
    which is indefinite symmetric. A similar iterative method to solve indefinite symmetric linear equations is MINRES, with a similar cost of each iteration as CG and a slower convergence.
\end{itemize}

\section{A Preconditioner for the CG Method}

\subsection{Motivation and Design Principle}

As $(\xi_k, s_k, \lambda_k)$ goes closer and closer to the solution of \eqref{wced}, some diagonal elements of $S_k^{-1} \Lambda_k$ are very small and some are very large (experiment shows that for a ramdom choice probability vector $\hat{\rho}$, with high probability, $S_k^{-1} \Lambda_k$ has more large elements than small elements, and the portion of small elements does not approach 0). The consequence is that $H= B^T B +B^T K^T S_k^{-1} \Lambda_k KB$ has many very large eigenvalues. In terms of $\kappa (H) =\lambda_1 /\lambda_n$, $\lambda_1 (H)$ is very large, and $\lambda_n (H)$ not so small because it is bounded below by the smallest eigenvalue of $B^T B$, so $\kappa (H)$ is large, and thus CG need many iterations to converge. (For \eqref{wefhoi}, the largest eigenvalue is large and the smallest eigenvalue is small, the condition number is even larger, so MINRES need even more iterations to converge.)

To solve this problem, we need to find a preconditioner $M$. With a preconditioner $M$, the time of each CG iteration is about the time of one $H$ operator and one $M^{-1}$ operator, and the convergence rate is \eqref{rarf} by replacing $\kappa (H)$ with $\kappa (L^{-T} H L)$, where $M =L^T L$. Roughly, the more $M$ is close to $H$, the smaller $\kappa (L^{-T} H L)$ is, the faster CG converges.

The idea to find a preconditioner $M$ is
\begin{itemize}
    \item $v \mapsto M^{-1} v$ is easy to calculate.
    \item $M$ is a good approximation of $H$.
\end{itemize}

Our preconditioner is
\begin{equation}\label{fspvsf}
    M = A_m^{T} \max (D_m, I) A_m,
\end{equation}
where $A_m, D_m \in GL_{|\mathcal{B}|} (\mathbb{R})$, $D_m$ is a diagonal submatrix of $S_k^{-1} \Lambda_k$, $\max (D_m, I)$ is the elementwise maximum of $D_m$ and the identity matrix with the same order, and $A_m$ is an invertible submatrix of $KB$.

Let $D= S_k^{-1} \Lambda_k$.

Note that the rows of $D$ and $KB$ can be  indexed by $(E,x)$, where $x \in E \subseteq X$.
Let $P \subseteq \{ (E,x) : x \in E \subseteq X \}$ be the subset of index such that the diagonal of $D_m$ consists of all diagonal entries of $D$ whose index is in $P$, and $A_m$ consists of all rows of $KB$ whose index is in $P$. We need to choose $P$ such that
\begin{itemize}
    \item $A_m$ is a square matrix, i.e., $|P| =N -(2^n -1)$.
    \item $A_m$ is invertible and $A_m^{-1}$ operator is easy to calculate. 
    \\
    This ensures that $M^{-1}$ operator, which is the composition of $A_m^{-1},\, \max (D_m, I)^{-1},\, A_m^{-T}$ operators, is easy to calculate. (It seems there are no general ways to calculate the operator transpose to some given operator, but I think this is often doable by analyzing the construction of the given operator.)
    \item Diagonal entries of $D_m$ are as large as possible among all diagonal entries in $D$.
    \\
    This relates to the requirement that $M$ is a good approximation of $H$. Recall that our goal is to make $\kappa (L^{-T} H L)$ relatively small to make convergence faster, instead of make $\kappa (L^{-T} H L)$ close to 1, which I think is impossible. Thus, we need $M$ to be close to the part of $H$ which leads to its large eigenvalues. Thus, to approximate
    \begin{equation}
        H= B^T B +(KB)^T D KB
    \end{equation}
    we can ignore $B^T B$, which does not lead to large eigenvalues, and replace $D$ with $\max (D,I)$ to avoid too small eigenvalues. This gives a good approximation of $H$:
    \begin{equation}\label{wf00n23ef}
        (KB)^T \max (D,I) KB.
    \end{equation}
    However, the inverse operator of \eqref{wf00n23ef} seems not easy to calculate because $KB$ is not a square matrix. Thus, we take square submatrix $\max (D_m, I)$ of $\max (D,I)$ and $A_m$ of $KB$ and construct $M$ as \eqref{fspvsf}. To make $M$ close to \eqref{wf00n23ef}, we want the diagonal entries of $\max (D_m, I)$ consists of large diagonal entries of $\max (D,I)$. If this is realized, then I think the approximation is good because the size of $\max (D_m, I)$ is only slightly smaller than the size of $\max (D,I)$ when $n$ is large, which ensures that $\max (D_m, I)$ captures most of the large diagonal entries of $\max (D,I)$.
\end{itemize}

\subsection{The Spanning Tree Condition for Preconditioner Invertibility}

\begin{definition}[The Boolean Lattice Graph $\calG$]
Let $\calG=(V,E)$ be an undirected graph where:
\begin{itemize}
    \item The vertex set $V$ is the power set of $X$, $|V|=2^n$.
    \item The edge set $E$ is indexed by $N = n2^{n-1}$ pairs $(D,x)$, where $x \in D \subseteq X$. The edge indexed by $(D,x)$ connects vertex $D$ and $D \setminus \{x\}$.
\end{itemize}
\end{definition}

\begin{remark}
    The edge of $\mathcal{G}$ is undirected, but a flow on some edge is directed. For example, 1 amount of a flow on an edge $(D,x)$ from $D$ to $D \setminus \{ x \}$ is equivalent to $-1$ amount of flow on the edge from $D \setminus \{ x \}$ to $D$. In the following definition, a flow on $\mathcal{G}$ is represented by a vector $\bm{\kappa} \in \mathbb{R}^E$. The amount of flow on the edge $(D,x)$ is $|\bm{\kappa} (D,x)|$. The $|\bm{\kappa} (D,x)|$ amount of flow on this edge is from $D$ to $D \setminus \{ x \}$ if $\bm{\kappa} (D,x) >0$, and it is from $D \setminus \{ x \}$ to $D$ if $\bm{\kappa} (D,x) <0$.
\end{remark}

\begin{definition}[Flows on $\calG$]
A vector $\bm{\kappa} \in \mathbb{R}^E$ is a \textbf{flow} if it satisfies the conservation law at every vertex $D \in V \setminus \{\emptyset, X\}$:
\begin{equation}\label{veu98euq}
    \sum_{x \in D} \kappa(D,x) = \sum_{y \notin D} \kappa(D \cup \{y\}, y).
\end{equation}
The \textbf{total flow} of $\kappa$ is defined as $\sum_{x \in X} \kappa(X,x)$.
\end{definition}

\begin{theorem}
A vector $\bm{\rho} \in \mathbb{R}^N$ is a valid choice probability vector (i.e., it satisfies the normalization constraints $C\bm{\rho}=\mathbf{1}$) if and only if $\bm{\kappa} = K\bm{\rho}$ is a flow on $\calG$ with a total flow of 1.
\end{theorem}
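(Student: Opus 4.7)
The plan is to reduce both the normalization condition and the flow/total-flow conditions to a single scalar identity indexed by subsets $D\subseteq X$. Introduce
\[
f(D):=\sum_{x\in D}\rho(D,x),\qquad g(D):=\sum_{x\in D}\kappa(D,x)-\sum_{y\notin D}\kappa(D\cup\{y\},y).
\]
Then $C\rho=\mathbf{1}$ is exactly $f(D)=1$ for every nonempty $D$, while flow conservation on $V\setminus\{\emptyset,X\}$ together with total flow $=1$ is exactly $g(D)=0$ for $D\neq\emptyset,X$ together with $g(X)=1$ (the second sum in $g$ is empty at $D=X$, so total flow is recovered automatically).

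The core step is to derive, for every $D\neq\emptyset$, the identity
\[
g(D)\;=\;\sum_{E\supseteq D}(-1)^{|E\setminus D|}\,f(E),
\]
which I will prove by substituting the M\"obius formula \eqref{eq:mobius} into both pieces of $g(D)$ and swapping summations. Pairs $(E,x)$ with $D\subseteq E$ and $x\in D$ contribute $(-1)^{|E\setminus D|}\rho(E,x)$ from the first piece; pairs $(E,y)$ with $D\subsetneq E$ and $y\in E\setminus D$ carry a sign $(-1)^{|E\setminus(D\cup\{y\})|}=(-1)^{|E\setminus D|-1}$ from the second piece, which after the minus sign in $g$ becomes $(-1)^{|E\setminus D|}$. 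Merging the two contributions rebuilds the full inner sum $\sum_{x\in E}\rho(E,x)=f(E)$ on each $E\supsetneq D$, while the isolated $E=D$ term supplies $f(D)$.

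Given the identity, both directions reduce to the alternating-sum cancellation $\sum_{k=0}^{m}\binom{m}{k}(-1)^k=0$ for $m\geq 1$. For $(\Rightarrow)$, substituting $f\equiv 1$ collapses the right-hand side to $0$ whenever $D\subsetneq X$ and to $1$ when $D=X$, yielding both flow conservation at every interior vertex and total flow $=1$. For $(\Leftarrow)$, I will use downward induction on $|D|$: the base case $D=X$ follows from $\kappa(X,x)=\rho(X,x)$ and the total-flow hypothesis, giving $f(X)=1$; in the inductive step, with $f(E)=1$ known for every $E\supsetneq D$, solving the identity for $f(D)$ under $g(D)=0$ and invoking the same alternating sum yields $f(D)=1$.

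The main obstacle is the sign-bookkeeping in the derivation of the key identity, specifically the one-element index shift $D\mapsto D\cup\{y\}$ in the second piece of $g(D)$, which flips the M\"obius sign by one and must be reconciled with the first piece before the two contributions can be regrouped into $f(E)$. Once that identity is in place, the theorem falls out as a clean instance of M\"obius inversion on the Boolean lattice, mirroring the duality that underlies the Block--Marschak transform itself.
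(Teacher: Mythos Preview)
Your proof is correct. The paper states this theorem without proof, so there is nothing to compare against; your reduction of both conditions to the single M\"obius identity
\[
g(D)=\sum_{E\supseteq D}(-1)^{|E\setminus D|}f(E)
\]
followed by the alternating-sum cancellation for $(\Rightarrow)$ and downward induction on $|D|$ for $(\Leftarrow)$ is a clean and complete argument. The sign-bookkeeping you flag as the main obstacle is handled correctly: the index shift $D\mapsto D\cup\{y\}$ drops $|E\setminus D|$ by one, and the outer minus sign in $g$ restores the common factor $(-1)^{|E\setminus D|}$, so the two pieces indeed recombine into $\sum_{x\in E}\rho(E,x)=f(E)$.
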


Thus, $\rho \in Image B$ if and only if $\rho$ satisfies $\sum_{x \in D} \rho (D,x) =0$ for all $D$ if and only if $K \rho$ is a flow with total flow 0. 
Thus, $Image KB$ is the space of all flows with total flow 0.
Moreover, for each flow $\kappa$ with total flow 0, there is a unique preimage $(KB)^{-1} \kappa$, which is $RK^{-1} \kappa$.

Recall that $P \subset \{(E,x)\}$ is a to be selected index set of size $|\mathcal{B}| = N - (2^n-1)$ and $A_m : \mathbb{R}^\mathcal{B} \rightarrow \mathbb{R}^P$ is such an operator:
\begin{equation}\label{viaveanqbfp}
    (A_m \xi) (D,x) = KB \xi (D,x),
\end{equation}
where $(D,x) \in P$.

\begin{theorem}
\label{thm:flow}
$A_m$ is invertible if and only if $P^c =E \setminus P$ forms a spanning tree of the graph $\mathcal{G}$.
\end{theorem}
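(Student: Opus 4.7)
The plan is to translate invertibility of $A_m$ into a combinatorial statement about the circulation space of $\calG$, and then invoke the classical equivalence between acyclicity of an edge set and the absence of nontrivial flows supported on it.

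First, by the discussion preceding the theorem, $KB$ is a bijection from $\mathbb{R}^{\mathcal{B}}$ onto the space $\mathcal{F}_0$ of flows on $\calG$ with total flow zero. Since $A_m = \pi_P \circ KB$, where $\pi_P:\mathbb{R}^{E}\to\mathbb{R}^{P}$ is the coordinate projection, and since $|\mathcal{B}|=|P|=N-(2^n-1)$, the square map $A_m$ is invertible iff it is injective, iff the only $\kappa \in \mathcal{F}_0$ whose support lies in $P^c$ is $\kappa=0$.

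Next I would identify $\mathcal{F}_0$ with the full cycle (circulation) space of $\calG$. Conservation is already imposed at every $D \neq \emptyset, X$; summing those $2^n-2$ equations shows that the net outflow at $\emptyset$ equals, up to sign, the net outflow at $X$. Hence imposing total flow zero at $X$ automatically yields conservation at $\emptyset$ and $X$ as well, so $\mathcal{F}_0$ coincides with the circulation space of $\calG$; its dimension $|E|-|V|+1$ agrees with $|\mathcal{B}|$, consistent with the bijection. The question thereby reduces to: when is there no nonzero circulation on $\calG$ supported in $P^c$?

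This is the standard cycle-space criterion. If $P^c$ contains a cycle, orient it consistently and place $\pm 1$ on its edges, zero elsewhere; this yields a nonzero circulation supported in $P^c$. Conversely, the circulations supported in $P^c$ form the cycle space of the subgraph $(V,P^c)$, whose dimension is $|P^c|-|V|+c(P^c)$, where $c(P^c)$ is the number of connected components of $(V,P^c)$; this dimension vanishes iff $(V,P^c)$ is a forest. Therefore $A_m$ is invertible iff $P^c$ is a forest. The cardinality $|P^c|=|E|-|\mathcal{B}|=2^n-1=|V|-1$ then pins the situation down: a forest with $|V|-1$ edges on $|V|$ vertices is automatically connected, hence a spanning tree, and conversely every spanning tree is acyclic with exactly $|V|-1$ edges. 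The one point requiring care is the orientation bookkeeping, since $\calG$ is undirected while $KB$ and the flow definition are sign-sensitive; adopting the natural orientation $(D,x):D\to D\setminus\{x\}$ on every edge makes the cycle-to-circulation correspondence transparent, and the remainder of the argument is routine.
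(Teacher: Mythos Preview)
Your proof is correct and follows essentially the same route as the paper: both reduce invertibility of $A_m$ to the statement that no nonzero total-flow-zero flow on $\calG$ is supported in $P^c$, argue that this fails precisely when $P^c$ contains a cycle, and then use the edge count $|P^c|=|V|-1$ to upgrade ``forest'' to ``spanning tree.'' The one noteworthy difference is in the converse direction: the paper proves ``spanning tree $\Rightarrow$ invertible'' constructively via a leaf-stripping algorithm (which it needs anyway to implement $A_m^{-1}$ in the preconditioner), whereas you bypass this by observing that $A_m$ is square, so injectivity alone suffices, and by invoking the cycle-space dimension formula $|P^c|-|V|+c(P^c)$ directly; this is a clean shortcut, though it does not yield the operator $L_{\mathrm{ext}}$ that the paper ultimately requires.
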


\begin{proof}
The operator $KB$ gives a 1 to 1 correspondence from $\mathbb{R}^\mathcal{B}$ to $image KB$, which is the space of all flows on $\mathcal{G}$ with total flow 0.
Thus, by \eqref{viaveanqbfp}, $A_m$ is invertible if and only if
for each $v \in \mathbb{R}^P$, there is a unique extension $\kappa \in \mathbb{R}^N$ of $v$ which is a flow with total flow 0.

Consider the case that $P^c$ contains a cycle $T$. Suppose $v \in \mathbb{R}^P$ and there exists a flow $\kappa$ with total flow 0 whose restriction on $P$ is $v$. Suppose $\tau$ is a non-zero cyclic flow on the cycle $T$. Then $\kappa +\tau$ is a flow with total flow 0. Since $\tau$ is a flow on $T \subseteq P^c$, the restriction of $\tau$ onto $P$ is 0, so the restriction of $\kappa +\tau$ onto $P$ is still $v$. We find another extension $\kappa +\tau$ of $v$ which is a flow of total flow 0, so $A_m$ is not invertible.

If $A_m$ is invertible, $P^c$ must not contain a cycle, so $P^c$ is a forest. Since $|P^c| =2^n -1= |V|-1$, it has to be a spanning tree of $\mathcal{G}$.

Assume $P^c$ is a spanning tree. The values of flow on tree edges can be uniquely determined from the values on co-tree $P$ edges by starting at the leaves of the tree and applying the zero-flow condition  iteratively until the root is reached. The algorithm is constructive, as shown below.
\end{proof}

Now we assume $P^c$ is a spanning tree, then for each $v \in \mathbb{R}^P$, there is a unique extension $\kappa \in \mathbb{R}^N$ of $v$ which is a flow with total flow 0. We denote this extension operator by $L_{\text{ext}}$, i.e., the unique 0-flow extension of $v \in \mathbb{R}^P$ is $L_{\text{ext}} v$.

\textbf{Algorithmic Implementation of $L_{\text{ext}}$ and its Adjoint}

The forward operator $L_{\text{ext}}$ is an information aggregation process on the spanning tree, while its adjoint $L_{\text{ext}}^T$ is an information distribution process.
\begin{algorithm}
  \caption{Forward Pass $\kappa = L_{\mathrm{ext}}(v_P)$}
  \label{alg:forward-pass}
  \begin{algorithmic}[1]
    \Require $v_P \in \mathbb{R}^P$
    \Ensure Full zero-flow $\kappa \in \mathbb{R}^N$
    \State Initialize $\kappa \in \mathbb{R}^N$ by setting
    $\kappa_e \gets (v_P)_e$ for all $e \in P$, and marking
    $\kappa_e$ as ``unknown'' for all $e \in T := P^c$.
    \State $T' \gets T$.
    \While{$T' \neq \emptyset$}
      \State Find a vertex $u \in V$ that is a leaf of the current tree $T'$.
      \State Let $e_u \in T'$ be the unique tree edge incident to $u$.
      \State Enforce the zero-flow condition at vertex $u$:
      \[
        \sum_{e \text{ incident to } u} \sigma_{ue}\,\kappa_e = 0,
      \]
      where $\sigma_{ue} \in \{+1,-1\}$ is given by~\eqref{veu98euq}.
      \State In this sum, only $\kappa_{e_u}$ is unknown. Solve for it:
      \[
        \kappa_{e_u}
        \gets
        - \sigma_{ue_u}
        \sum_{\substack{e \neq e_u \\ e \text{ incident to } u}}
        \sigma_{ue}\,\kappa_e.
      \]
      \State Remove $e_u$ from $T'$.
    \EndWhile
    \State \Return $\kappa$
  \end{algorithmic}
\end{algorithm}

\newpage
\begin{remark}[Geometric Duality and Complexity Separation]
\label{rem:complexity_duality}
Our framework and the seminal work of Smeulders et al. (\cite{SmeuldersCherchyeDeRock2021}) represent dual geometric approaches to the RUM projection problem, distinguished by a fundamental phase transition in computational complexity. Smeulders et al. operate on the \textit{Vertex Representation} ($\mathcal{P}_V = \operatorname{conv}(\Pi_n)$), where the primal variable is a mixture over $n!$ rankings. The bottleneck of this approach is the column generation pricing subproblem, which is equivalent to the \textit{Maximum Weight Rank Aggregation} problem and is known to be NP-hard \cite{dwork2001rank, jagabathula2019limit}. Consequently, the complexity lower bound scales factorially, $\Omega(n!)$, rendering $n \approx 10$ intractable.

In contrast, our framework leverages the \textit{Hyperplane Representation} ($\mathcal{P}_H = \{K\rho \ge 0\}$), transforming the combinatorial search into a continuous quadratic program. While the constraint count $N = n2^{n-1}$ scales exponentially, this represents a massive dimensionality reduction from the factorial scaling of $\mathcal{P}_V$ (e.g., at $n=20$, $N \approx 10^7$ vs. $n! \approx 10^{18}$). The historic barrier to the $\mathcal{P}_H$ approach has been the exponentially ill-conditioned Hessian arising from the barrier method. Our contribution is to resolve this \textit{numerical complexity} via the Tree-Preconditioner ( \ref{thm:flow} ), effectively reducing the problem to a sequence of linear solves in $\mathcal{P}$ (Polynomial time in $N$). This shift from combinatorial NP-hardness to numerical linear algebra enables the scalability to $n=20$ and the differentiability required for the deep learning layers in Section 4.
\end{remark}

\begin{figure}[h!] 
 \centering 
 
 \begin{minipage}{0.48\textwidth}
     \centering
     \begin{tikzpicture}[scale=1.2, font=\small] 
         \node (0) at (0,0) {$\emptyset$}; 
         \node (1) at (-2,1.5) {$\{0\}$}; 
         \node (2) at (0,1.5) {$\{1\}$}; 
         \node (3) at (2,1.5) {$\{2\}$}; 
         \node (12) at (-2,3) {$\{0,1\}$}; 
         \node (13) at (0,3) {$\{0,2\}$}; 
         \node (23) at (2,3) {$\{1,2\}$}; 
         \node (123) at (0,4.5) {$\{0,1,2\}$}; 
         \draw [thick, red] (1) -- node[pos=0.65, font=\footnotesize, fill=white, inner sep=1pt] {$\kappa (e_1) =?$} (0);
         \draw [dashed, blue] (2) -- node[pos=0.3, font=\footnotesize, fill=white, inner sep=1pt] {$\kappa (e_2) =5$} (0);
         \draw [dashed, blue] (3) -- node[pos=0.65, font=\footnotesize, fill=white, inner sep=1pt] {$\kappa (e_3) =-3$} (0);
         \draw [dashed, blue] (12) -- node[midway, font=\footnotesize, fill=white, inner sep=1pt] {$\kappa (e_4) =0$} (1);
         \draw [thick, red] (13) -- node[pos=0.8, font=\footnotesize, fill=white, inner sep=1pt] {$\kappa (e_5) =?$} (1);
         \draw [thick, red] (12) -- node[pos=0.2, font=\footnotesize, fill=white, inner sep=1pt] {$\kappa (e_6) =?$} (2);
         \draw [thick, red] (23) -- node[pos=0.7, left, font=\footnotesize, fill=white, inner sep=1pt] {$\kappa (e_7) =?$} (2);
         \draw [thick, red] (13) -- node[pos=0.3, left, font=\footnotesize, fill=white, inner sep=1pt] {$\kappa (e_8) =?$} (3);
         \draw [thick, red] (23) -- node[midway, font=\footnotesize, fill=white, inner sep=1pt] {$\kappa (e_9) =?$} (3);
         \draw [dashed, blue] (123) -- node[pos=0.35, font=\footnotesize, fill=white, inner sep=1pt] {$\kappa (e_{10}) =2$} (12);
         \draw [thick, red] (123) -- node[pos=0.7, font=\footnotesize, fill=white, inner sep=1pt] {$\kappa (e_{11}) =?$} (13);
         \draw [dashed, blue] (123) -- node[pos=0.35, font=\footnotesize, fill=white, inner sep=1pt] {$\kappa (e_{12}) =-3$} (23);
     \end{tikzpicture} 
 \end{minipage}
 \hfill 
 \begin{minipage}{0.48\textwidth}
     \centering
     \begin{tikzpicture}[scale=1.2, font=\small] 
         \node (0) at (0,0) {$\emptyset$}; 
         \node (1) at (-2,1.5) {$\{0\}$}; 
         \node (2) at (0,1.5) {$\{1\}$}; 
         \node (3) at (2,1.5) {$\{2\}$}; 
         \node (12) at (-2,3) {$\{0,1\}$}; 
         \node (13) at (0,3) {$\{0,2\}$}; 
         \node (23) at (2,3) {$\{1,2\}$}; 
         \node (123) at (0,4.5) {$\{0,1,2\}$}; 
         \draw [dashed, blue] (1) -- node[pos=0.65, font=\footnotesize, fill=white, inner sep=1pt] {$\kappa (e_1) =-2$} (0);
         \draw [dashed, blue] (2) -- node[pos=0.3, font=\footnotesize, fill=white, inner sep=1pt] {$\kappa (e_2) =5$} (0);
         \draw [dashed, blue] (3) -- node[pos=0.65, font=\footnotesize, fill=white, inner sep=1pt] {$\kappa (e_3) =-3$} (0);
         \draw [dashed, blue] (12) -- node[midway, font=\footnotesize, fill=white, inner sep=1pt] {$\kappa (e_4) =0$} (1);
         \draw [thick, red] (13) -- node[pos=0.8, font=\footnotesize, fill=white, inner sep=1pt] {$\kappa (e_5) =?$} (1);
         \draw [thick, red] (12) -- node[pos=0.2, font=\footnotesize, fill=white, inner sep=1pt] {$\kappa (e_6) =?$} (2);
         \draw [thick, red] (23) -- node[pos=0.7, left, font=\footnotesize, fill=white, inner sep=1pt] {$\kappa (e_7) =?$} (2);
         \draw [thick, red] (13) -- node[pos=0.3, left, font=\footnotesize, fill=white, inner sep=1pt] {$\kappa (e_8) =?$} (3);
         \draw [thick, red] (23) -- node[midway, font=\footnotesize, fill=white, inner sep=1pt] {$\kappa (e_9) =?$} (3);
         \draw [dashed, blue] (123) -- node[pos=0.35, font=\footnotesize, fill=white, inner sep=1pt] {$\kappa (e_{10}) =2$} (12);
         \draw [thick, red] (123) -- node[pos=0.7, font=\footnotesize, fill=white, inner sep=1pt] {$\kappa (e_{11}) =?$} (13);
         \draw [dashed, blue] (123) -- node[pos=0.35, font=\footnotesize, fill=white, inner sep=1pt] {$\kappa (e_{12}) =-3$} (23);
     \end{tikzpicture}
 \end{minipage}
 
\end{figure}

\begin{figure}[h!] 
 \centering 
 
 \begin{minipage}{0.48\textwidth}
     \centering
     \begin{tikzpicture}[scale=1.2, font=\small] 
         \node (0) at (0,0) {$\emptyset$}; 
         \node (1) at (-2,1.5) {$\{0\}$}; 
         \node (2) at (0,1.5) {$\{1\}$}; 
         \node (3) at (2,1.5) {$\{2\}$}; 
         \node (12) at (-2,3) {$\{0,1\}$}; 
         \node (13) at (0,3) {$\{0,2\}$}; 
         \node (23) at (2,3) {$\{1,2\}$}; 
         \node (123) at (0,4.5) {$\{0,1,2\}$}; 
         \draw [dashed, blue] (1) -- node[pos=0.65, font=\footnotesize, fill=white, inner sep=1pt] {$\kappa (e_1) =-2$} (0);
         \draw [dashed, blue] (2) -- node[pos=0.3, font=\footnotesize, fill=white, inner sep=1pt] {$\kappa (e_2) =5$} (0);
         \draw [dashed, blue] (3) -- node[pos=0.65, font=\footnotesize, fill=white, inner sep=1pt] {$\kappa (e_3) =-3$} (0);
         \draw [dashed, blue] (12) -- node[midway, font=\footnotesize, fill=white, inner sep=1pt] {$\kappa (e_4) =0$} (1);
         \draw [dashed, blue] (13) -- node[pos=0.8, font=\footnotesize, fill=white, inner sep=1pt] {$\kappa (e_5) =-2$} (1);
         \draw [thick, red] (12) -- node[pos=0.2, font=\footnotesize, fill=white, inner sep=1pt] {$\kappa (e_6) =?$} (2);
         \draw [thick, red] (23) -- node[pos=0.7, left, font=\footnotesize, fill=white, inner sep=1pt] {$\kappa (e_7) =?$} (2);
         \draw [thick, red] (13) -- node[pos=0.3, left, font=\footnotesize, fill=white, inner sep=1pt] {$\kappa (e_8) =?$} (3);
         \draw [thick, red] (23) -- node[midway, font=\footnotesize, fill=white, inner sep=1pt] {$\kappa (e_9) =?$} (3);
         \draw [dashed, blue] (123) -- node[pos=0.35, font=\footnotesize, fill=white, inner sep=1pt] {$\kappa (e_{10}) =2$} (12);
         \draw [thick, red] (123) -- node[pos=0.7, font=\footnotesize, fill=white, inner sep=1pt] {$\kappa (e_{11}) =?$} (13);
         \draw [dashed, blue] (123) -- node[pos=0.35, font=\footnotesize, fill=white, inner sep=1pt] {$\kappa (e_{12}) =-3$} (23);
     \end{tikzpicture} 
 \end{minipage}
 \hfill 
 \begin{minipage}{0.48\textwidth}
     \centering
     \begin{tikzpicture}[scale=1.2, font=\small] 
         \node (0) at (0,0) {$\emptyset$}; 
         \node (1) at (-2,1.5) {$\{0\}$}; 
         \node (2) at (0,1.5) {$\{1\}$}; 
         \node (3) at (2,1.5) {$\{2\}$}; 
         \node (12) at (-2,3) {$\{0,1\}$}; 
         \node (13) at (0,3) {$\{0,2\}$}; 
         \node (23) at (2,3) {$\{1,2\}$}; 
         \node (123) at (0,4.5) {$\{0,1,2\}$}; 
         \draw [dashed, blue] (1) -- node[pos=0.65, font=\footnotesize, fill=white, inner sep=1pt] {$\kappa (e_1) =-2$} (0);
         \draw [dashed, blue] (2) -- node[pos=0.3, font=\footnotesize, fill=white, inner sep=1pt] {$\kappa (e_2) =5$} (0);
         \draw [dashed, blue] (3) -- node[pos=0.65, font=\footnotesize, fill=white, inner sep=1pt] {$\kappa (e_3) =-3$} (0);
         \draw [dashed, blue] (12) -- node[midway, font=\footnotesize, fill=white, inner sep=1pt] {$\kappa (e_4) =0$} (1);
         \draw [dashed, blue] (13) -- node[pos=0.8, font=\footnotesize, fill=white, inner sep=1pt] {$\kappa (e_5) =-2$} (1);
         \draw [dashed, blue] (12) -- node[pos=0.2, font=\footnotesize, fill=white, inner sep=1pt] {$\kappa (e_6) =2$} (2);
         \draw [thick, red] (23) -- node[pos=0.7, left, font=\footnotesize, fill=white, inner sep=1pt] {$\kappa (e_7) =?$} (2);
         \draw [thick, red] (13) -- node[pos=0.3, left, font=\footnotesize, fill=white, inner sep=1pt] {$\kappa (e_8) =?$} (3);
         \draw [thick, red] (23) -- node[midway, font=\footnotesize, fill=white, inner sep=1pt] {$\kappa (e_9) =?$} (3);
         \draw [dashed, blue] (123) -- node[pos=0.35, font=\footnotesize, fill=white, inner sep=1pt] {$\kappa (e_{10}) =2$} (12);
         \draw [thick, red] (123) -- node[pos=0.7, font=\footnotesize, fill=white, inner sep=1pt] {$\kappa (e_{11}) =?$} (13);
         \draw [dashed, blue] (123) -- node[pos=0.35, font=\footnotesize, fill=white, inner sep=1pt] {$\kappa (e_{12}) =-3$} (23);
     \end{tikzpicture}
 \end{minipage}
 
\end{figure}

\begin{figure}[h!] 
 \centering 
 
 \begin{minipage}{0.48\textwidth}
     \centering
     \begin{tikzpicture}[scale=1.2, font=\small] 
         \node (0) at (0,0) {$\emptyset$}; 
         \node (1) at (-2,1.5) {$\{0\}$}; 
         \node (2) at (0,1.5) {$\{1\}$}; 
         \node (3) at (2,1.5) {$\{2\}$}; 
         \node (12) at (-2,3) {$\{0,1\}$}; 
         \node (13) at (0,3) {$\{0,2\}$}; 
         \node (23) at (2,3) {$\{1,2\}$}; 
         \node (123) at (0,4.5) {$\{0,1,2\}$}; 
         \draw [dashed, blue] (1) -- node[pos=0.65, font=\footnotesize, fill=white, inner sep=1pt] {$\kappa (e_1) =-2$} (0);
         \draw [dashed, blue] (2) -- node[pos=0.3, font=\footnotesize, fill=white, inner sep=1pt] {$\kappa (e_2) =5$} (0);
         \draw [dashed, blue] (3) -- node[pos=0.65, font=\footnotesize, fill=white, inner sep=1pt] {$\kappa (e_3) =-3$} (0);
         \draw [dashed, blue] (12) -- node[midway, font=\footnotesize, fill=white, inner sep=1pt] {$\kappa (e_4) =0$} (1);
         \draw [dashed, blue] (13) -- node[pos=0.8, font=\footnotesize, fill=white, inner sep=1pt] {$\kappa (e_5) =-2$} (1);
         \draw [dashed, blue] (12) -- node[pos=0.2, font=\footnotesize, fill=white, inner sep=1pt] {$\kappa (e_6) =2$} (2);
         \draw [dashed, blue] (23) -- node[pos=0.7, left, font=\footnotesize, fill=white, inner sep=1pt] {$\kappa (e_7) =3$} (2);
         \draw [thick, red] (13) -- node[pos=0.3, left, font=\footnotesize, fill=white, inner sep=1pt] {$\kappa (e_8) =?$} (3);
         \draw [thick, red] (23) -- node[midway, font=\footnotesize, fill=white, inner sep=1pt] {$\kappa (e_9) =?$} (3);
         \draw [dashed, blue] (123) -- node[pos=0.35, font=\footnotesize, fill=white, inner sep=1pt] {$\kappa (e_{10}) =2$} (12);
         \draw [thick, red] (123) -- node[pos=0.7, font=\footnotesize, fill=white, inner sep=1pt] {$\kappa (e_{11}) =?$} (13);
         \draw [dashed, blue] (123) -- node[pos=0.35, font=\footnotesize, fill=white, inner sep=1pt] {$\kappa (e_{12}) =-3$} (23);
     \end{tikzpicture}
 \end{minipage}
 \hfill 
 \begin{minipage}{0.48\textwidth}
     \centering
     \begin{tikzpicture}[scale=1.2, font=\small] 
         \node (0) at (0,0) {$\emptyset$}; 
         \node (1) at (-2,1.5) {$\{0\}$}; 
         \node (2) at (0,1.5) {$\{1\}$}; 
         \node (3) at (2,1.5) {$\{2\}$}; 
         \node (12) at (-2,3) {$\{0,1\}$}; 
         \node (13) at (0,3) {$\{0,2\}$}; 
         \node (23) at (2,3) {$\{1,2\}$}; 
         \node (123) at (0,4.5) {$\{0,1,2\}$}; 
         \draw [dashed, blue] (1) -- node[pos=0.65, font=\footnotesize, fill=white, inner sep=1pt] {$\kappa (e_1) =-2$} (0);
         \draw [dashed, blue] (2) -- node[pos=0.3, font=\footnotesize, fill=white, inner sep=1pt] {$\kappa (e_2) =5$} (0);
         \draw [dashed, blue] (3) -- node[pos=0.65, font=\footnotesize, fill=white, inner sep=1pt] {$\kappa (e_3) =-3$} (0);
         \draw [dashed, blue] (12) -- node[midway, font=\footnotesize, fill=white, inner sep=1pt] {$\kappa (e_4) =0$} (1);
         \draw [dashed, blue] (13) -- node[pos=0.8, font=\footnotesize, fill=white, inner sep=1pt] {$\kappa (e_5) =-2$} (1);
         \draw [dashed, blue] (12) -- node[pos=0.2, font=\footnotesize, fill=white, inner sep=1pt] {$\kappa (e_6) =2$} (2);
         \draw [dashed, blue] (23) -- node[pos=0.7, left, font=\footnotesize, fill=white, inner sep=1pt] {$\kappa (e_7) =3$} (2);
         \draw [thick, red] (13) -- node[pos=0.3, left, font=\footnotesize, fill=white, inner sep=1pt] {$\kappa (e_8) =?$} (3);
         \draw [dashed, blue] (23) -- node[midway, font=\footnotesize, fill=white, inner sep=1pt] {$\kappa (e_9) =-6$} (3);
         \draw [dashed, blue] (123) -- node[pos=0.35, font=\footnotesize, fill=white, inner sep=1pt] {$\kappa (e_{10}) =2$} (12);
         \draw [thick, red] (123) -- node[pos=0.7, font=\footnotesize, fill=white, inner sep=1pt] {$\kappa (e_{11}) =?$} (13);
         \draw [dashed, blue] (123) -- node[pos=0.35, font=\footnotesize, fill=white, inner sep=1pt] {$\kappa (e_{12}) =-3$} (23);
     \end{tikzpicture}
 \end{minipage}
 
\end{figure}

\begin{figure}[h!] 
 \centering 
 
 \begin{minipage}{0.48\textwidth}
     \centering
     \begin{tikzpicture}[scale=1.2, font=\small] 
         \node (0) at (0,0) {$\emptyset$}; 
         \node (1) at (-2,1.5) {$\{0\}$}; 
         \node (2) at (0,1.5) {$\{1\}$}; 
         \node (3) at (2,1.5) {$\{2\}$}; 
         \node (12) at (-2,3) {$\{0,1\}$}; 
         \node (13) at (0,3) {$\{0,2\}$}; 
         \node (23) at (2,3) {$\{1,2\}$}; 
         \node (123) at (0,4.5) {$\{0,1,2\}$}; 
         \draw [dashed, blue] (1) -- node[pos=0.65, font=\footnotesize, fill=white, inner sep=1pt] {$\kappa (e_1) =-2$} (0);
         \draw [dashed, blue] (2) -- node[pos=0.3, font=\footnotesize, fill=white, inner sep=1pt] {$\kappa (e_2) =5$} (0);
         \draw [dashed, blue] (3) -- node[pos=0.65, font=\footnotesize, fill=white, inner sep=1pt] {$\kappa (e_3) =-3$} (0);
         \draw [dashed, blue] (12) -- node[midway, font=\footnotesize, fill=white, inner sep=1pt] {$\kappa (e_4) =0$} (1);
         \draw [dashed, blue] (13) -- node[pos=0.8, font=\footnotesize, fill=white, inner sep=1pt] {$\kappa (e_5) =-2$} (1);
         \draw [dashed, blue] (12) -- node[pos=0.2, font=\footnotesize, fill=white, inner sep=1pt] {$\kappa (e_6) =2$} (2);
         \draw [dashed, blue] (23) -- node[pos=0.7, left, font=\footnotesize, fill=white, inner sep=1pt] {$\kappa (e_7) =3$} (2);
         \draw [dashed, blue] (13) -- node[pos=0.3, left, font=\footnotesize, fill=white, inner sep=1pt] {$\kappa (e_8) =3$} (3);
         \draw [dashed, blue] (23) -- node[midway, font=\footnotesize, fill=white, inner sep=1pt] {$\kappa (e_9) =-6$} (3);
         \draw [dashed, blue] (123) -- node[pos=0.35, font=\footnotesize, fill=white, inner sep=1pt] {$\kappa (e_{10}) =2$} (12);
         \draw [thick, red] (123) -- node[pos=0.7, font=\footnotesize, fill=white, inner sep=1pt] {$\kappa (e_{11}) =?$} (13);
         \draw [dashed, blue] (123) -- node[pos=0.35, font=\footnotesize, fill=white, inner sep=1pt] {$\kappa (e_{12}) =-3$} (23);
     \end{tikzpicture}
 \end{minipage}
 \hfill 
 \begin{minipage}{0.48\textwidth}
     \centering
     \begin{tikzpicture}[scale=1.2, font=\small] 
         \node (0) at (0,0) {$\emptyset$}; 
         \node (1) at (-2,1.5) {$\{0\}$}; 
         \node (2) at (0,1.5) {$\{1\}$}; 
         \node (3) at (2,1.5) {$\{2\}$}; 
         \node (12) at (-2,3) {$\{0,1\}$}; 
         \node (13) at (0,3) {$\{0,2\}$}; 
         \node (23) at (2,3) {$\{1,2\}$}; 
         \node (123) at (0,4.5) {$\{0,1,2\}$}; 
         \draw [dashed, blue] (1) -- node[pos=0.65, font=\footnotesize, fill=white, inner sep=1pt] {$\kappa (e_1) =-2$} (0);
         \draw [dashed, blue] (2) -- node[pos=0.3, font=\footnotesize, fill=white, inner sep=1pt] {$\kappa (e_2) =5$} (0);
         \draw [dashed, blue] (3) -- node[pos=0.65, font=\footnotesize, fill=white, inner sep=1pt] {$\kappa (e_3) =-3$} (0);
         \draw [dashed, blue] (12) -- node[midway, font=\footnotesize, fill=white, inner sep=1pt] {$\kappa (e_4) =0$} (1);
         \draw [dashed, blue] (13) -- node[pos=0.8, font=\footnotesize, fill=white, inner sep=1pt] {$\kappa (e_5) =-2$} (1);
         \draw [dashed, blue] (12) -- node[pos=0.2, font=\footnotesize, fill=white, inner sep=1pt] {$\kappa (e_6) =2$} (2);
         \draw [dashed, blue] (23) -- node[pos=0.7, left, font=\footnotesize, fill=white, inner sep=1pt] {$\kappa (e_7) =3$} (2);
         \draw [dashed, blue] (13) -- node[pos=0.3, left, font=\footnotesize, fill=white, inner sep=1pt] {$\kappa (e_8) =3$} (3);
         \draw [dashed, blue] (23) -- node[midway, font=\footnotesize, fill=white, inner sep=1pt] {$\kappa (e_9) =-6$} (3);
         \draw [dashed, blue] (123) -- node[pos=0.35, font=\footnotesize, fill=white, inner sep=1pt] {$\kappa (e_{10}) =2$} (12);
         \draw [dashed, blue] (123) -- node[pos=0.7, font=\footnotesize, fill=white, inner sep=1pt] {$\kappa (e_{11}) =1$} (13);
         \draw [dashed, blue] (123) -- node[pos=0.35, font=\footnotesize, fill=white, inner sep=1pt] {$\kappa (e_{12}) =-3$} (23);
     \end{tikzpicture}
 \end{minipage}
 
\end{figure}

\newpage
\begin{algorithm}
  \caption{Adjoint Pass $z_{P} = L_{\mathrm{ext}}^{\top}(q)$ via Direct Transposition}
  \label{alg:adjoint-pass}
  \begin{algorithmic}[1]
    \Require A vector $q \in \mathbb{R}^{|E|}$; forward computation sequence of tree edges
      $\mathcal{S}_{T} = (e_1,\dots,e_k)$; corresponding sequence of vertices
      $(u_1,\dots,u_k)$.
    \Ensure A vector $z_{P} \in \mathbb{R}^{|P|}$.

    \State \textbf{(Initialization of the adjoint state)}
    \State Create a temporary adjoint flow vector $\bar{\kappa} \in \mathbb{R}^{|E|}$.
    \State Initialize it with the input vector $q$ by setting
      $\bar{\kappa}_e \gets q_e$ for all $e \in E$.

    \State \textbf{(Adjoint propagation via reverse iteration)}
    \For{$i \gets k, k-1, \dots, 1$}
      \State Let $e_i \in T$ be the $i$-th tree edge in $\mathcal{S}_{T}$ and
        $u_i \in V$ be the corresponding vertex used in the forward pass.
      \Statex \hspace{\algorithmicindent}The forward computation at step $i$ was
      \Statex \hspace{2\algorithmicindent}%
        $\displaystyle
          \kappa_{e_i}
          =
          -\sigma_{u_i e_i}
          \sum_{e' \in \operatorname{Inc}(u_i) \setminus \{e_i\}}
          \sigma_{u_i e'} \kappa_{e'}.$
      \Statex \hspace{\algorithmicindent}The corresponding adjoint update distributes
        $\bar{\kappa}_{e_i}$ to the contributing edges:
      \ForAll{$e' \in \operatorname{Inc}(u_i) \setminus \{e_i\}$}
        \State
          $\displaystyle
            \bar{\kappa}_{e'}
            \gets
            \bar{\kappa}_{e'}
            - \sigma_{u_i e'} \sigma_{u_i e_i} \bar{\kappa}_{e_i}.$
      \EndFor
    \EndFor

    \State \textbf{(Restriction to $P$)}
    \State Define $z_{P} \in \mathbb{R}^{|P|}$ by
      $(z_{P})_e \gets \bar{\kappa}_e$ for all $e \in P$.

    \State \Return $z_{P}$.
  \end{algorithmic}
\end{algorithm}
\newpage

\begin{algorithm}
  \caption{Full Algorithm for $v \mapsto M^{-1}v$)}
  \label{alg:Minv}
  \begin{algorithmic}[1]
    \State \textbf{Setup (once per IPM iteration)}
    \State Define edge weights $w_e \gets (\max(D,I))_{ee}$ for all $e \in E$.
    \State Use Kruskal's algorithm to find a minimum spanning tree $T$ of
      $\mathcal{G}$ with respect to the weights $w$.
    \State Define the co-tree $P \gets E \setminus T$. This choice of $(T,P)$
      implicitly defines the operators $A_m$ and $\max(D_m, I)$.

    \Statex
    \State \textbf{Application (given input $v \in \mathbb{R}^{|\mathcal{B}|}$)}
    \Require $v \in \mathbb{R}^{|\mathcal{B}|}$
    \Ensure $u_3 = M^{-1} v$
    \State Compute $u_1 \gets A_m^{-T} v$ by applying the sequence of adjoint
      operators
      \[
        u_1
        =
        (L_{\mathrm{ext}})^{\top}
        \circ (K^{-1})^{\top}
        \circ R^{\top}(v),
      \]
      using Algorithm~\ref{alg:adjoint-pass} for the action of
      $L_{\mathrm{ext}}^{\top}$.
    \State Compute $u_2 \gets \max(D_m, I)^{-1} u_1$, i.e.\ perform element-wise
      division:
      \[
        (u_2)_e
        =
        \bigl(\max(D_m, I)^{-1}\bigr)_{ee} (u_1)_e
        \quad \text{for all } e \in E.
      \]
    \State Compute $u_3 \gets A_m^{-1} u_2$ by applying the sequence of forward
      operators
      \[
        u_3
        =
        R
        \circ K^{-1}
        \circ L_{\mathrm{ext}}(u_2),
      \]
      using Algorithm~\ref{alg:forward-pass} for the action of $L_{\mathrm{ext}}$.
    \State \Return $u_3$
  \end{algorithmic}
\end{algorithm}

\begin{figure}[h!]
\centering

\begin{subfigure}{0.48\textwidth}
\centering
\begin{tikzpicture}[scale=1.5, font=\small]
    \node (0) at (0,0) {$\emptyset$};
    \node (1) at (-2,1.5) {$\{0\}$};
    \node (2) at (0,1.5) {$\{1\}$};
    \node (3) at (2,1.5) {$\{2\}$};
    \node (12) at (-2,3) {$\{0,1\}$};
    \node (13) at (0,3) {$\{0,2\}$};
    \node (23) at (2,3) {$\{1,2\}$};
    \node (123) at (0,4.5) {$\{0,1,2\}$};

    \graph [edges={-latex}] {
        (1) -> (0); (2) -> (0); (3) -> (0);
        (12) -> (1); (12) -> (2);
        (13) -> (1); (13) -> (3);
        (23) -> (2); (23) -> (3);
        (123) -> (12); (123) -> (13); (123) -> (23);
    };
\end{tikzpicture}
\caption{Boolean lattice graph $\mathcal{G}$ for $n=3$.}
\end{subfigure}
\hfill

\begin{subfigure}{0.48\textwidth}
\centering
\begin{tikzpicture}[scale=1.5, font=\small]
    \node (0) at (0,0) {$\emptyset$};
    \node (1) at (-2,1.5) {$\{0\}$};
    \node (2) at (0,1.5) {$\{1\}$};
    \node (3) at (2,1.5) {$\{2\}$};
    \node (12) at (-2,3) {$\{0,1\}$};
    \node (13) at (0,3) {$\{0,2\}$};
    \node (23) at (2,3) {$\{1,2\}$};
    \node (123) at (0,4.5) {$\{0,1,2\}$};

    \graph [edges={thick, red}] {
        (1) -- (0);
        (12) -- (2); (13) -- (1); (23) -- (3);
        (123) -- (13);
        (13) -- (3);
        (23) -- (2);
    };

    \graph [edges={dashed, blue}] {
        (2) -- (0); 
        (3) -- (0);
        (12) -- (1);
        (123) -- (12); (123) -- (23);
    };
\end{tikzpicture}
\caption{A spanning tree $T$ and its co-tree $P$.}
\end{subfigure}

\caption{Boolean lattice $\mathcal{G}$ and one possible spanning tree/co-tree partition.}
\end{figure}
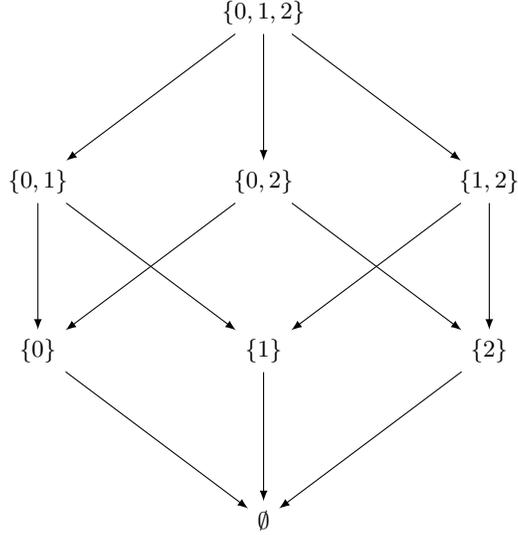
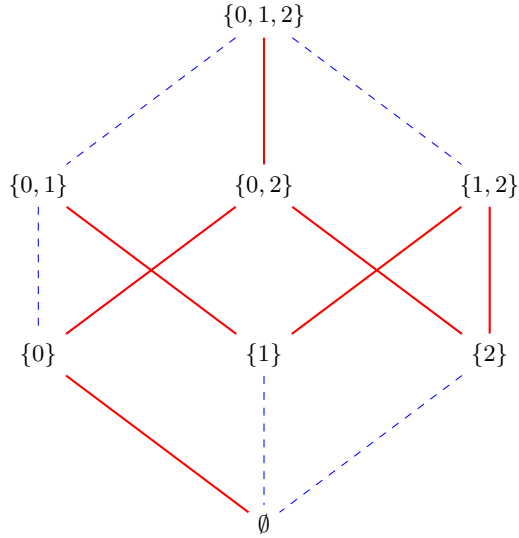

\newpage

\section{Application 1: The Adjoint Method for the
Differentiable Quadratic Program  in Deep Learning}
We consider the quadratic program (QP) defined in the reduced-coordinate space $\xi \in \R^{|\mathcal{B}|}$, parameterized by a vector $\mathbf{c} \in \R^{|\mathcal{B}|}$:
\begin{equation}
\begin{aligned}
\xi^\star(\mathbf{c}) = \arg\min_{\xi} \quad & \frac{1}{2} \xi^T H_0 \xi - \mathbf{c}^T \xi \\
\text{s.t.} \quad & G\xi + \mathbf{b} \ge 0,
\end{aligned}
\label{eq:qp}
\end{equation}
where $H_0 := B^T B$ is symmetric positive semidefinite, $G := KB$, and $\mathbf{b} = Ku$. We assume that the necessary regularity conditions for a unique solution hold.

The KKT conditions for the optimal solution $(\xi^\star, \lambda^\star, s^\star)$ are given by the system $F(\xi, \lambda, s, \mathbf{c}) = 0$:
\begin{subequations}
\begin{align}
    \nabla_{\xi} \mathcal{L} = H_0 \xi^\star - \mathbf{c} - G^T \lambda^\star &= 0 \label{eq:kkt_stationarity} \\
    G\xi^\star + \mathbf{b} - s^\star &= 0 \label{eq:kkt_primal} \\
    \Lambda^\star S^\star \mathbf{1} &= 0 \label{eq:kkt_comp} \\
    s^\star, \lambda^\star &\ge 0,
\end{align}
\label{eq:kkt}
\end{subequations}
where $\mathcal{L}$ is the Lagrangian, and $\Lambda^\star, S^\star$ are diagonal matrices of the vectors $\lambda^\star, s^\star$. The solution $(\xi^\star, \lambda^\star, s^\star)$ is an implicit function of the parameter $\mathbf{c}$.

\begin{theorem}[Adjoint Gradient Computation]
Let $\mathcal{J}(\xi^\star)$ be a scalar loss function that depends on the solution $\xi^\star$ of the QP \eqref{eq:qp}. Let $g_{\xi^\star} := \nabla_{\xi^\star} \mathcal{J}$ be the gradient of the loss with respect to $\xi^\star$. The gradient of the loss with respect to the parameter $\mathbf{c}$, denoted $g_{\mathbf{c}} := \nabla_{\mathbf{c}} \mathcal{J}$, is given by the solution $\mathbf{w}$ to the linear system:
\begin{equation}
    \Hcal(\xi^\star, \lambda^\star, s^\star) \mathbf{w} = g_{\xi^\star}
\end{equation}
where $\Hcal := H_0 + G^T D G$ is the symmetric positive-definite matrix from the reduced interior-point system at convergence, with $D = (S^\star)^{-1} \Lambda^\star$.
\end{theorem}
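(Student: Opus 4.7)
My plan is to apply the implicit function theorem (IFT) to the equality part of the KKT system \eqref{eq:kkt_stationarity}--\eqref{eq:kkt_comp}, extract the Jacobian $\partial\xi^\star/\partial\mathbf{c}$, and then invoke the chain rule. The crux is the observation that the Schur-complement reduction required for the adjoint is algebraically identical to the one already performed for the IPM forward step in \eqref{w99adfsv}, so the very same symmetric positive-definite operator $\Hcal$ serves both passes.

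First, I differentiate each of \eqref{eq:kkt_stationarity}--\eqref{eq:kkt_comp} with respect to $\mathbf{c}$, treating $(\xi^\star,\lambda^\star,s^\star)$ as smooth implicit functions of $\mathbf{c}$ (justification below). This produces the linearized block system
\[
H_0\,\partial_{\mathbf{c}}\xi^\star - G^{T}\partial_{\mathbf{c}}\lambda^\star = I,\qquad
G\,\partial_{\mathbf{c}}\xi^\star - \partial_{\mathbf{c}} s^\star = 0,\qquad
\Lambda^\star\partial_{\mathbf{c}} s^\star + S^\star\partial_{\mathbf{c}}\lambda^\star = 0.
\]
The second equation yields $\partial_{\mathbf{c}} s^\star = G\,\partial_{\mathbf{c}}\xi^\star$; substituting into the third and inverting $S^\star$ (legitimate since $s^\star > 0$ along the central path) gives $\partial_{\mathbf{c}}\lambda^\star = -D\,G\,\partial_{\mathbf{c}}\xi^\star$ with $D = (S^\star)^{-1}\Lambda^\star$. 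Feeding this back into the first equation collapses the cross-terms into precisely the Schur complement $\Hcal = H_0 + G^{T}DG$, so
\[
\Hcal\,\partial_{\mathbf{c}}\xi^\star = I,\qquad\text{i.e.,}\qquad \partial_{\mathbf{c}}\xi^\star = \Hcal^{-1}.
\]
The chain rule $g_{\mathbf{c}} = (\partial_{\mathbf{c}}\xi^\star)^{T} g_{\xi^\star}$ combined with symmetry $\Hcal^{T}=\Hcal$ then yields $g_{\mathbf{c}} = \Hcal^{-1} g_{\xi^\star}$, so the vector $\mathbf{w}$ defined by $\Hcal\mathbf{w}=g_{\xi^\star}$ is exactly $g_{\mathbf{c}}$.

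The principal difficulty lies in justifying the IFT step, namely smoothness of the map $\mathbf{c}\mapsto(\xi^\star,\lambda^\star,s^\star)$ and nonsingularity of the full KKT Jacobian at convergence. The standard sufficient conditions are LICQ at $\xi^\star$ together with strict complementarity, under which the central-path map is real-analytic and the $3\times 3$ linearized system above is uniquely solvable; these are precisely the regularity conditions already assumed for the QP. Strict positivity of $s^\star$ on the central path (and, at the limit, on inactive indices by strict complementarity) legitimizes forming $D$, while the positive definiteness of $\Hcal$ established in Section 3 guarantees the unique recoverability of $\mathbf{w}$. Because $\Hcal$ is shared between the forward projection and the backward gradient, the tree-preconditioned CG machinery of Section 3 transfers verbatim to the adjoint solve, realizing the ``common linear operator'' promised in contribution (b).
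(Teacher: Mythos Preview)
Your proposal is correct and follows essentially the same route as the paper: differentiate the KKT equalities \eqref{eq:kkt_stationarity}--\eqref{eq:kkt_comp} with respect to $\mathbf{c}$, eliminate $\partial_{\mathbf{c}} s^\star$ and $\partial_{\mathbf{c}}\lambda^\star$ via the same Schur-complement reduction used in \eqref{w99adfsv}, obtain $\Hcal\,\partial_{\mathbf{c}}\xi^\star = I$, and finish with the chain rule and symmetry of $\Hcal$. If anything, your treatment of the regularity hypotheses (LICQ, strict complementarity, central-path smoothness) is slightly more explicit than the paper's, which sketches the active/inactive-set argument but does not name these conditions directly.
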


\begin{proof}
The objective is to compute the total derivative $\frac{d\mathcal{J}}{d\mathbf{c}}$. By the chain rule:
\begin{equation}
    \frac{d\mathcal{J}}{d\mathbf{c}} = \frac{\partial \mathcal{J}}{\partial \xi^\star} \frac{\partial \xi^\star}{\partial \mathbf{c}} = g_{\xi^\star}^T \frac{\partial \xi^\star}{\partial \mathbf{c}}.
\end{equation}
The core task is to find the Jacobian matrix $\frac{\partial \xi^\star}{\partial \mathbf{c}}$. We achieve this by applying the Implicit Function Theorem to the KKT system \eqref{eq:kkt}. We differentiate the KKT conditions with respect to the components of $\mathbf{c}$, holding at the optimal solution.

Let us differentiate the stationarity condition \eqref{eq:kkt_stationarity} and the primal feasibility condition \eqref{eq:kkt_primal}:
\begin{align}
    H_0 \frac{\partial \xi^\star}{\partial \mathbf{c}} - I - G^T \frac{\partial \lambda^\star}{\partial \mathbf{c}} &= 0 \label{eq:diff_stat} \\
    G \frac{\partial \xi^\star}{\partial \mathbf{c}} - \frac{\partial s^\star}{\partial \mathbf{c}} &= 0 \label{eq:diff_primal}
\end{align}
From \eqref{eq:diff_primal}, we have $\frac{\partial s^\star}{\partial \mathbf{c}} = G \frac{\partial \xi^\star}{\partial \mathbf{c}}$.

Next, we differentiate the complementarity condition \eqref{eq:kkt_comp}. At the solution, for each component $i$, either $s_i^\star=0$ or $\lambda_i^\star=0$ (or both, in degenerate cases). Let $\mathcal{A} = \{i \mid s_i^\star=0\}$ be the active set and $\mathcal{I} = \{i \mid \lambda_i^\star=0\}$ be the inactive set.
\begin{itemize}
    \item For $i \in \mathcal{A}$, $s_i^\star=0$. Since $s_i(\mathbf{c}) \ge 0$, any feasible perturbation requires $ds_i \ge 0$, and since $\lambda_i^\star \ge 0$, we have $\lambda_i^\star ds_i = 0$.
    \item For $i \in \mathcal{I}$, $\lambda_i^\star=0$. Any feasible perturbation requires $d\lambda_i \ge 0$, and since $s_i^\star \ge 0$, we have $s_i^\star d\lambda_i = 0$.
\end{itemize}
Differentiating $\Lambda^\star S^\star = 0$ yields $\Lambda^\star \frac{\partial S^\star}{\partial \mathbf{c}} + S^\star \frac{\partial \Lambda^\star}{\partial \mathbf{c}} = 0$, which implies:
\begin{equation}
    \lambda_i^\star \frac{\partial s_i^\star}{\partial \mathbf{c}} + s_i^\star \frac{\partial \lambda_i^\star}{\partial \mathbf{c}} = 0, \quad \forall i. \label{eq:diff_comp}
\end{equation}
For $i \in \mathcal{A}$, $s_i^\star=0$ and $\lambda_i^\star > 0$ (assuming strict complementarity), which forces $\frac{\partial s_i^\star}{\partial \mathbf{c}} = 0$.
For $i \in \mathcal{I}$, $\lambda_i^\star=0$ and $s_i^\star > 0$, which forces $\frac{\partial \lambda_i^\star}{\partial \mathbf{c}} = 0$.

Let's define the diagonal matrix $D = (S^\star)^{-1} \Lambda^\star$. At the solution, for $i \in \mathcal{A}$, $s_i^\star \to 0$ and $\lambda_i^\star > 0$, so $D_{ii} \to \infty$. For $i \in \mathcal{I}$, $\lambda_i^\star = 0$ and $s_i^\star > 0$, so $D_{ii} = 0$.

From \eqref{eq:diff_comp}, we can establish a relationship analogous to the interior-point variable elimination. The stationarity of the IPM Lagrangian implies that $\frac{\partial \lambda^\star}{\partial \mathbf{c}} = -D \frac{\partial s^\star}{\partial \mathbf{c}}$.
Substituting $\frac{\partial s^\star}{\partial \mathbf{c}}$ from \eqref{eq:diff_primal}:
\begin{equation}
    \frac{\partial \lambda^\star}{\partial \mathbf{c}} = -D \left( G \frac{\partial \xi^\star}{\partial \mathbf{c}} \right). \label{eq:dlambda_dc}
\end{equation}

Now, substitute \eqref{eq:dlambda_dc} into the differentiated stationarity condition \eqref{eq:diff_stat}:
\begin{equation}
    H_0 \frac{\partial \xi^\star}{\partial \mathbf{c}} - I - G^T \left( -D G \frac{\partial \xi^\star}{\partial \mathbf{c}} \right) = 0
\end{equation}
Rearranging the terms:
\begin{equation}
    H_0 \frac{\partial \xi^\star}{\partial \mathbf{c}} + G^T D G \frac{\partial \xi^\star}{\partial \mathbf{c}} = I
\end{equation}
\begin{equation}
    (H_0 + G^T D G) \frac{\partial \xi^\star}{\partial \mathbf{c}} = I
\end{equation}
Let $\Hcal := H_0 + G^T D G$, which is exactly the SPD matrix of the reduced IPM system at convergence.
\begin{equation}
    \Hcal \frac{\partial \xi^\star}{\partial \mathbf{c}} = I \quad \implies \quad \frac{\partial \xi^\star}{\partial \mathbf{c}} = \Hcal^{-1}. \label{eq:jacobian_final}
\end{equation}
This gives us the full Jacobian of the QP solution with respect to its parameter vector.
Finally, we compute the desired gradient of the loss, $g_{\mathbf{c}}$:
\begin{equation}
    g_{\mathbf{c}}^T = g_{\xi^\star}^T \frac{\partial \xi^\star}{\partial \mathbf{c}} = g_{\xi^\star}^T \Hcal^{-1}.
\end{equation}
Taking the transpose of both sides (and noting $\Hcal$ is symmetric):
\begin{equation}
    g_{\mathbf{c}} = (\Hcal^{-1})^T g_{\xi^\star} = \Hcal^{-1} g_{\xi^\star}.
\end{equation}
To find $g_{\mathbf{c}}$, we do not invert $\Hcal$. Instead, we solve the linear system:
\begin{equation}
    \Hcal g_{\mathbf{c}} = g_{\xi^\star}.
\end{equation}
This is precisely the relation stated in the theorem, with $\mathbf{w} = g_{\mathbf{c}}$.
\end{proof}

\begin{remark}[Computational Equivalence]
This theorem establishes a profound and computationally convenient result. The gradient required for the backward pass of a deep learning model, $g_{\mathbf{c}}$, is found by solving a linear system with the \textbf{exact same symmetric positive-definite matrix $\Hcal$} that is solved against in the forward pass (the interior-point iterations). Therefore, the same numerical machinery—the Conjugate Gradient method and the custom-designed preconditioner $M$—can be directly reused for both the forward and backward passes, ensuring maximum computational efficiency and code reuse.
\end{remark}
\textbf{Real World Application of the Program}

Unlike vague assessments of whether an output is good or bad the output $\rho^\star$ of our framework possesses a mathematically provable and binary property: theoretical consistency with economic axioms. For any given input $\hat{\rho}$, the constructed output $\rho^\star$ necessarily satisfies all the axioms of a RUM and belongs by construction to the polytope defined by those axioms:
\[
\rho^\star \in \left\{ \rho \mid K\rho \geq 0,\, C\rho = 1 \right\}.
\]

The framework not only outputs a theoretically consistent solution $\rho^\star$ but also yields an endogenous measure of model-theory conflict. Specifically, we quantify how much deviation from the data-driven output $\hat{\rho}$ must be made to satisfy the theoretical constraints. We solve the QP:
\[
\min_{\rho} \|\rho - \hat{\rho}\|_2^2,
\]
where $\hat{\rho} = f_\theta(\mathbf{z})$ is the original prediction produced by a neural network. 

In model diagnosis or risk monitoring, one can track $\mathcal{J}^\star$ continuously. A small value of $\mathcal{J}^\star(f_\theta(\mathbf{z}))$ indicates a high level of alignment between the neural prediction and the axiomatic structure, hence strong confidence in the result. Conversely, a large value of $\mathcal{J}^\star(f_\theta(\mathbf{z}))$ serves as an unambiguous warning signal. It indicates a substantial conflict between the learned pattern from data and the injected axioms, possibly due to distributional shifts, rare macroeconomic anomalies (such as financial crises), or other latent structural deviations. We also show a statistical test mechanism of this measure.

\begin{figure}
    \centering
    \includegraphics[width=0.5\linewidth]{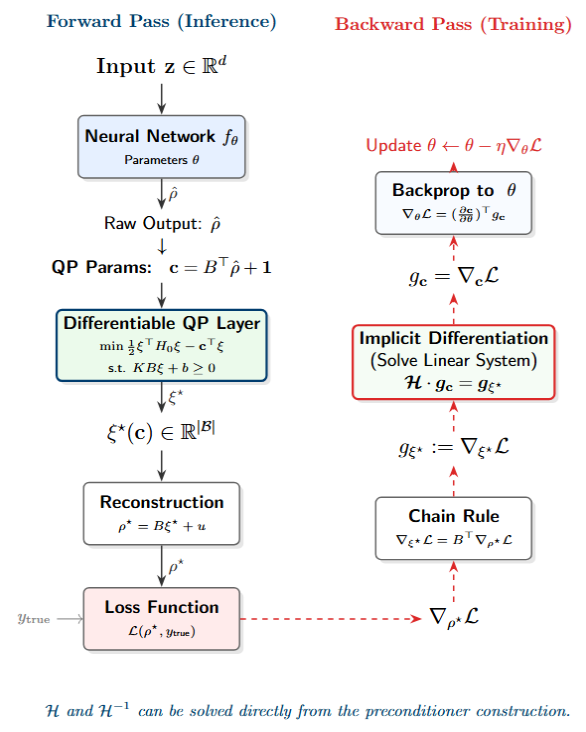}
    \caption{Network Construction}
    \label{fig:placeholder}
\end{figure}

\newpage
\section{Application 2: The Hypothesis Testing Problem and the Boundary Challenge}

The fundamental scientific question is whether an empirically observed choice behavior, summarized by a frequency vector $\hat{\bm{\pi}}$, is consistent with the theory of Random Utility Model.

\begin{definition}[The Null and Alternative Hypotheses]
Let $\bm{\pi}_0$ be the true, population-level choice probability vector, and let $\mathcal{R} = \{ \bm{\rho} \in \mathbb{R}^N : K\bm{\rho} \geq \mathbf{0}, \, C\bm{\rho} = \mathbf{1} \}$ be the RUM-consistent polytope. We test the null hypothesis $H_0$ against the alternative $H_1$:
\begin{equation}
    H_0: \bm{\pi}_0 \in \mathcal{R} \quad \text{versus} \quad H_1: \bm{\pi}_0 \notin \mathcal{R}.
\end{equation}
This is geometrically equivalent to testing if the minimum distance from $\bm{\pi}_0$ to the convex set $\mathcal{R}$ is zero.
\end{definition}

\begin{definition}[The Test Statistic]
Let $\Omega$ be a symmetric positive-definite weighting matrix. The test statistic is defined as the $\Omega$-weighted squared distance from the empirical vector $\hat{\bm{\pi}}$ to the constraint set $\mathcal{R}$:
\begin{equation}
    J_N(\Omega) := N \cdot \min_{\bm{\eta} \in \mathcal{R}} (\hat{\bm{\pi}} - \bm{\eta})^\top \Omega (\hat{\bm{\pi}} - \bm{\eta}).
\end{equation}
\end{definition}

\begin{remark}[The Asymptotic Challenge]
A primary difficulty, as established in the econometrics literature (\cite{AndrewsSoares2010}), is that the limiting distribution of $J_N(\Omega)$ under $H_0$ is a non-standard \emph{chi-bar-squared} ($\bar{\chi}^2$) distribution. This is a weighted sum of chi-squared distributions, where the weights depend on the unknown local geometry of the polytope $\mathcal{R}$ at the true parameter $\bm{\pi}_0$, especially if $\bm{\pi}_0$ lies on the boundary $\partial \mathcal{R}$. This dependency makes the analytical calculation of critical values for the test intractable.
\end{remark}

To surmount this challenge, we employ a bootstrap procedure specifically designed to provide consistent inference in the presence of such boundary conditions.

\subsection{The Solver as a Computational Oracle for Inference}

The statistical procedure requires repeated, efficient computation of Euclidean projections onto $\mathcal{R}$. The QP solver developed in the preceding sections, based on the reparameterization $\bm{\rho} = B \bm{\xi} + \bm{u}$, provides the exact computational oracle for this task.

\begin{definition}[The Euclidean Projection Oracle $\mathbb{P}_{\mathcal{R}}$]
The oracle is a mapping $\mathbb{P}_{\mathcal{R}}: \mathbb{R}^N \to \mathcal{R} \times \mathbb{R}_{+}$ that takes an arbitrary vector $\mathbf{p} \in \mathbb{R}^N$ and returns its L2 Euclidean projection onto $\mathcal{R}$ and the corresponding squared distance.
\begin{equation}
    \mathbb{P}_{\mathcal{R}}(\mathbf{p}) := \left( \bm{\rho}^\star, \|\mathbf{p} - \bm{\rho}^\star\|_2^2 \right), \quad \text{where} \quad \bm{\rho}^\star := \arg\min_{\bm{\eta} \in \mathcal{R}} \|\mathbf{p} - \bm{\eta}\|_2^2.
\end{equation}
\end{definition}

\begin{proposition}[Solver as Oracle Implementation]
Let $\mathcal{A}_{\text{QP}}$ denote the complete interior-point algorithm that solves the QP \eqref{wced}. The oracle $\mathbb{P}_{\mathcal{R}}$ is implemented by the following sequence:
\begin{enumerate}
    \item \textbf{Input Transformation:} Given $\mathbf{p} \in \mathbb{R}^N$, construct the QP linear coefficient vector $\mathbf{c} := B^\top \mathbf{p} + \mathbf{1}$.
    \item \textbf{Core QP Solution:} Invoke the solver to obtain the optimal reduced vector $\bm{\xi}^\star := \mathcal{A}_{\text{QP}}(\mathbf{c})$.
    \item \textbf{Output Reconstruction:} Reconstruct the L2 projection $\bm{\rho}^\star := B\bm{\xi}^\star + \bm{u}$ and compute the squared Euclidean distance $\|\mathbf{p} - \bm{\rho}^\star\|_2^2$.
\end{enumerate}
This proposition establishes the key separation of concerns: the statistical 
inference procedures can be expressed entirely through an oracle interface for optimization subproblems, with this oracle instantiated concretely by our high-performance interior-point solver in the computational 
implementation.

\end{proposition}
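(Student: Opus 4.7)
The plan is to verify the proposed three-step implementation by establishing an equivalence between the nearest-point problem $\min_{\bm{\eta} \in \mathcal{R}} \|\mathbf{p} - \bm{\eta}\|_2^2$ and the reduced quadratic program \eqref{wced} solved by $\mathcal{A}_{\text{QP}}$. The reparameterization $\bm{\rho} = B\bm{\xi} + \bm{u}$ that already drives the forward algorithm will supply the bijection between the two feasible sets, and a direct expansion of the squared norm will supply the coefficient vector $\mathbf{c} = B^\top \mathbf{p} + \mathbf{1}$.

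First I would show that $\bm{\xi} \mapsto B\bm{\xi} + \bm{u}$ is an affine bijection between $\mathbb{R}^{\mathcal{B}}$ and the hyperplane $\{\bm{\rho} : C\bm{\rho} = \mathbf{1}\}$. The forward direction uses that $\sum_{x\in D}(B\bm{\xi})(D,x) = 0$ by definition of $B$, together with $\sum_{x \in D} \bm{u}(D,x) = 1$, so any image lies in the normalization hyperplane. The reverse direction uses the retraction $R$ and the identity $\bm{\rho} = BR\bm{\rho} + \bm{u}$ stated in the problem setting, valid whenever $C\bm{\rho} = \mathbf{1}$. Injectivity of $B$ (since $B\bm{\xi} = 0$ forces $\bm{\xi}(D,x) = 0$ on every element of $\mathcal{B}$) makes the correspondence bijective. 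Composing with the inequality $K\bm{\rho} \ge 0$, which pulls back to $KB\bm{\xi} + \mathbf{b} \ge 0$ via $\mathbf{b} = K\bm{u}$, identifies $\mathcal{R}$ with $\{\bm{\xi} : KB\bm{\xi} + \mathbf{b} \ge 0\}$—exactly the feasible set of \eqref{wced}.

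Next I would expand the objective:
\begin{equation}
\|\mathbf{p} - (B\bm{\xi} + \bm{u})\|_2^2
= \bm{\xi}^\top B^\top B \bm{\xi} + 2\bm{\xi}^\top B^\top(\bm{u} - \mathbf{p}) + \|\bm{u} - \mathbf{p}\|_2^2.
\end{equation}
Using the identity $B^\top \bm{u} = -\mathbf{1}$ derived in Section 2, the linear term collapses to $-2(B^\top \mathbf{p} + \mathbf{1})^\top \bm{\xi} = -2\mathbf{c}^\top \bm{\xi}$. Dividing by $2$ and discarding the additive constant $\tfrac{1}{2}\|\bm{u} - \mathbf{p}\|_2^2$ recovers precisely the reduced QP objective $\tfrac{1}{2}\bm{\xi}^\top B^\top B \bm{\xi} - \mathbf{c}^\top \bm{\xi}$ of \eqref{wced} with $\hat{\bm{\rho}}$ replaced by $\mathbf{p}$. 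Since additive constants and positive rescalings do not affect the argmin, and the feasible-set bijection carries $\bm{\xi}$-minimizers to $\bm{\rho}$-minimizers, the unique optimum $\bm{\xi}^\star = \mathcal{A}_{\text{QP}}(\mathbf{c})$ reconstructs the unique Euclidean projection $\bm{\rho}^\star = B\bm{\xi}^\star + \bm{u}$, and evaluating $\|\mathbf{p} - \bm{\rho}^\star\|_2^2$ supplies the second coordinate of the oracle output.

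The main obstacle here is bookkeeping rather than mathematical depth: one must track the sign conventions consistently between the generic QP template and our formulation, confirm that $B^\top \bm{u} = -\mathbf{1}$ and $\mathbf{b} = K\bm{u}$ propagate correctly through the manipulations, and ensure that the discarded constant $\tfrac{1}{2}\|\bm{u}-\mathbf{p}\|_2^2$ is indeed the only discrepancy between the QP objective and the true squared distance (hence the necessity of computing $\|\mathbf{p} - \bm{\rho}^\star\|_2^2$ directly in step 3 rather than reading it off the QP optimal value). Uniqueness of $\bm{\rho}^\star$ follows from strict convexity of the squared Euclidean norm on the closed convex set $\mathcal{R}$, and uniqueness of $\bm{\xi}^\star$ then follows from injectivity of $B$; together these guarantee that the three-step oracle is well-defined and deterministic.
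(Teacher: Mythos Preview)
Your proposal is correct and follows the same route as the paper. The paper does not supply a separate proof for this proposition; it treats the statement as an immediate restatement of the problem reformulation already carried out in Section~2.2 (the passage from the original projection problem to \eqref{wced}), and your argument simply makes that derivation explicit for a generic input $\mathbf{p}$ in place of $\hat{\bm{\rho}}$, using exactly the same ingredients: the affine parametrization $\bm{\rho} = B\bm{\xi} + \bm{u}$ onto the normalization hyperplane, the identity $B^\top\bm{u} = -\mathbf{1}$, and $\mathbf{b} = K\bm{u}$.
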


\subsection{The Complete Algorithm for a Boundary-Robust Bootstrap Test}

\begin{algorithm}[H]
\caption{A Bootstrap Test for RUM Consistency}
\label{alg:bootstrap_test}
\begin{algorithmic}[1]
\State \textbf{Inputs:} Empirical vector $\hat{\bm{\pi}} \in \mathbb{R}^N$; sample size $N$; number of bootstrap replications $M$; significance level $\alpha$; weighting matrix $\Omega$.

\Statex
\State \textbf{Step 1: Compute the Observed Test Statistic}
\State Compute the L2 projection of the observed data by calling the oracle:
    $ (\bm{\rho}_{\text{obs}}^\star, \_) \gets \mathbb{P}_{\mathcal{R}}(\hat{\bm{\pi}}) $.
\State Compute the $\Omega$-weighted test statistic:
    $ J_N \gets N \cdot (\hat{\bm{\pi}} - \bm{\rho}_{\text{obs}}^\star)^\top \Omega (\hat{\bm{\pi}} - \bm{\rho}_{\text{obs}}^\star) $.

\Statex
\State \textbf{Step 2: Compute the Tightened Centering Point}
\State Select a fixed point $\bm{\rho}_{\text{int}} \in \operatorname{int}(\mathcal{R})$ (e.g., constructed from $\delta_{\text{int}}(D,x)=1/|D|$).
\State Set the tightening rate $\tau_N = c N^{-a}$ for pre-specified constants $c > 0$ and $a \in (0, 1/2)$.
\State Define the tightening vector: $\bm{s}_N := \tau_N \bm{\rho}_{\text{int}}$.
\State Compute the projection of the shifted data: $(\bm{\rho}_{\text{tight}}^\star, \_) \gets \mathbb{P}_{\mathcal{R}}(\hat{\bm{\pi}} - \bm{s}_N)$.
\State Form the final centering point, which is guaranteed to be in $\operatorname{int}(\mathcal{R})$:
    $ \bm{\eta}_{\text{center}} \gets \bm{\rho}_{\text{tight}}^\star + \bm{s}_N $.

\Statex
\State \textbf{Step 3: Execute the Bootstrap Loop}
\State Initialize the set of bootstrap statistics $\mathcal{J}^\star \gets \emptyset$.
\For{$m = 1, \dots, M$}
    \State Generate a bootstrap sample: $\hat{\bm{\pi}}^{\star} \sim \frac{1}{N} \cdot \text{Multinomial}(N, \hat{\bm{\pi}})$.
    \State Recenter the bootstrap sample to simulate the null hypothesis:
        $ \tilde{\bm{\pi}}^\star \gets \hat{\bm{\pi}}^\star - \hat{\bm{\pi}} + \bm{\eta}_{\text{center}} $.
    \State Compute the L2 projection for the recentered sample:
        $ (\bm{\rho}_{\text{boot}}^\star, \_) \gets \mathbb{P}_{\mathcal{R}}(\tilde{\bm{\pi}}^\star) $.
    \State Compute the bootstrap test statistic with the same weighting matrix $\Omega$:
        $ J_N^\star \gets N \cdot (\tilde{\bm{\pi}}^\star - \bm{\rho}_{\text{boot}}^\star)^\top \Omega (\tilde{\bm{\pi}}^\star - \bm{\rho}_{\text{boot}}^\star) $.
    \State $\mathcal{J}^\star \gets \mathcal{J}^\star \cup \{J_N^\star\}$.
\EndFor

\Statex
\State \textbf{Step 4: Make Statistical Decision}
\State Compute the bootstrap p-value from the empirical distribution $\mathcal{J}^\star$:
    $ \hat{p} \gets \frac{1}{M} \sum_{J^\star \in \mathcal{J}^\star} \mathbf{1}\{ J^\star \geq J_N \}$.
\If {$\hat{p} < \alpha$}
    \State \textbf{return} Reject $H_0$.
\Else
    \State \textbf{return} Do not reject $H_0$.
\EndIf
\end{algorithmic}
\end{algorithm}

\section{Extension to Incomplete Data: Low-Rank Perturbation and Spectral Acceleration}

In practical applications, choice probabilities are frequently observed only for a subset of all possible choice sets. We formulate the projection problem under incomplete data and demonstrate that, counter-intuitively, data sparsity induces a low-rank structure in the Hessian that significantly accelerates the solver convergence.

\subsection{Canonical Formulation on the Observation Manifold}

Let $\mathfrak{D} = 2^X \setminus \emptyset$ be the set of all possible choice sets. We define the observation domain $\mathcal{D} \subseteq \mathfrak{D}$ as the subset of choice sets for which empirical data is available. The observation manifold $\mathcal{M}$ is defined as:
\begin{equation}
    \mathcal{M} := \bigl\{ (D,x) \in \mathcal{D} \times X : x \in D \bigr\}.
\end{equation}
Let $N = |\mathfrak{D}|$ and $M = |\mathcal{M}|$. We introduce the canonical projection operator $P_{\mathcal{M}}: \mathbb{R}^N \to \mathbb{R}^N$ which masks unobserved entries:
\begin{equation}
    (P_{\mathcal{M}} v)(D,x) := 
    \begin{cases}
    v(D,x) & \text{if }(D,x)\in \mathcal{M}, \\
    0 & \text{otherwise}.
    \end{cases}
\end{equation}
Note that $P_{\mathcal{M}}$ is idempotent ($P_{\mathcal{M}}^2 = P_{\mathcal{M}}$) and symmetric, acting as an orthogonal projector onto the subspace of observed coordinates.

Given an empirical vector $\hat{\rho} \in \mathbb{R}^N$ (where $\hat{\rho}$ is supported on $\mathcal{M}$), the primal problem is to find $\rho^* \in \mathbb{R}^N$ solving:
\begin{equation}
    \min_{\rho \in \mathbb{R}^N} \frac{1}{2} \| P_{\mathcal{M}}(\rho - \hat{\rho}) \|_2^2 \quad \text{subject to} \quad \rho = B\xi + u, \quad K\rho \ge 0.
\end{equation}
Substituting the affine parameterization $\rho = B\xi + u$ leads to the unconstrained quadratic formulation in the reduced variable $\xi \in \mathbb{R}^d$:
\begin{equation} \label{eq:inc_obj}
    \min_{\xi \in \mathbb{R}^d} \Psi(\xi) := \frac{1}{2} \xi^\top Q_{\mathcal{M}} \xi + c_{\mathcal{M}}^\top \xi,
\end{equation}
where the Hessian operator $Q_{\mathcal{M}}$ and linear term $c_{\mathcal{M}}$ are given by:
\begin{equation}
    Q_{\mathcal{M}} := B^\top P_{\mathcal{M}} B, \quad c_{\mathcal{M}} := B^\top P_{\mathcal{M}} (u - \hat{\rho}).
\end{equation}

\subsection{Spectral Decomposition of the Data Hessian}

The convergence of the primal-dual interior point method is governed by the spectral properties of the Newton system matrix $H_k$ at iteration $k$:
\begin{equation}
    H_k = Q_{\mathcal{M}} + (KB)^\top D_k (KB),
\end{equation}
where $D_k$ is the strictly positive diagonal barrier matrix. We explicitly characterize the spectrum of $Q_{\mathcal{M}}$.

\begin{lemma}[Block-Diagonal Decomposition] \label{lemma:block}
    The matrix $Q_{\mathcal{M}} \in \mathbb{R}^{d \times d}$ admits a block-diagonal decomposition up to permutation. For each observed set $D \in \mathcal{D}$, let $n_D = |D|-1$. The sub-block corresponding to $D$, denoted $W_{n_D} \in \mathbb{R}^{n_D \times n_D}$, satisfies:
    \begin{equation}
        W_{n_D} = I_{n_D} + \mathbf{1}_{n_D} \mathbf{1}_{n_D}^\top,
    \end{equation}
    where $I_{n_D}$ is the identity matrix and $\mathbf{1}_{n_D}$ is the vector of all ones. For any $D \notin \mathcal{D}$, the corresponding block is the zero matrix.
\end{lemma}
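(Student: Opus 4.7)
The plan is to exploit the observation that both $B$ and its adjoint $B^\top$ are block-diagonal across choice sets $D$: the coordinate $(B\xi)(D,\cdot)$ only involves the $n_D$ entries $\{\xi(D,x) : x \in D, x \neq \max D\}$, and similarly $(B^\top \rho)(D,\cdot)$ only involves $\rho(D,\cdot)$. Since $P_{\mathcal{M}}$ is itself block-diagonal at this level (it either preserves all entries $(D,\cdot)$ when $D \in \mathcal{D}$ or annihilates them when $D \notin \mathcal{D}$), the composition $Q_{\mathcal{M}} = B^\top P_{\mathcal{M}} B$ inherits the same block-diagonal structure with one block per choice set $D$. This observation alone handles the permutation claim and the fact that blocks corresponding to $D \notin \mathcal{D}$ vanish identically.

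The remaining step is to compute the block $W_{n_D}$ for a fixed $D \in \mathcal{D}$. I would fix $D$, write $m = \max D$, enumerate the remaining elements as $x_1,\dots,x_{n_D}$, and parameterize the block input by $\xi_i := \xi(D, x_i)$. Using the definition of $B$, the image on the $D$-block is the vector with $(B\xi)(D, x_i) = \xi_i$ and $(B\xi)(D, m) = -\sum_j \xi_j$. Because $D \in \mathcal{D}$, $P_{\mathcal{M}}$ is the identity on this block, so applying $B^\top$ via the formula $B^\top \rho(D, x_i) = \rho(D, x_i) - \rho(D, m)$ yields
\begin{equation}
    (Q_{\mathcal{M}} \xi)(D, x_i) = \xi_i - \left(-\sum_j \xi_j\right) = \xi_i + \sum_j \xi_j,
\end{equation}
which is exactly the $i$-th entry of $(I_{n_D} + \mathbf{1}_{n_D}\mathbf{1}_{n_D}^\top)\xi_D$.

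There is no real obstacle here beyond bookkeeping; the only point requiring a little care is to confirm that the block-decomposition is genuine and not merely up to off-diagonal coupling through the ``$\max D$'' coordinate. That coupling is precisely what $B^\top$ already absorbs within a single block, so no cross-$D$ interaction arises. I would therefore present the proof as two short steps: (i) a structural lemma that $B$, $P_{\mathcal{M}}$, and $B^\top$ commute with the direct-sum decomposition $\mathbb{R}^{\mathcal{B}} = \bigoplus_{D} \mathbb{R}^{n_D}$ (and analogously on the $\mathbb{R}^N$ side), giving block-diagonality and the vanishing of unobserved blocks; and (ii) the explicit three-line matrix computation above identifying the observed block as $I_{n_D} + \mathbf{1}_{n_D}\mathbf{1}_{n_D}^\top$.
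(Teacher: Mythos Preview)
Your proposal is correct and follows essentially the same approach as the paper: both arguments exploit the fact that $B$, $P_{\mathcal{M}}$, and $B^\top$ respect the direct-sum decomposition over choice sets $D$, then compute the single block explicitly. The only cosmetic difference is that the paper identifies the block via the quadratic form $\|(B\xi)_D\|_2^2 = \sum_i \xi_i^2 + (\sum_i \xi_i)^2$, whereas you compute the linear action $(Q_{\mathcal{M}}\xi)(D,x_i) = \xi_i + \sum_j \xi_j$ directly; these are equivalent three-line calculations leading to the same $I_{n_D} + \mathbf{1}_{n_D}\mathbf{1}_{n_D}^\top$.
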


\begin{proof}
    The operator $B$ maps reduced coordinates $\xi$ to flow conservation residuals. For any $(D,x) \in \mathcal{M}$, the action of $P_{\mathcal{M}} B$ is restricted to $D$. The quadratic form decomposes as:
    \begin{equation}
        \xi^\top B^\top P_{\mathcal{M}} B \xi = \sum_{D \in \mathcal{D}} \left\| (B\xi)_D \right\|_2^2.
    \end{equation}
    For a fixed $D$, let $\xi_D \in \mathbb{R}^{n_D}$ be the associated variables. The term corresponds to $\sum_{i=1}^{n_D} \xi_i^2 + (\sum_{i=1}^{n_D} \xi_i)^2 = \xi_D^\top (I + \mathbf{1}\mathbf{1}^\top) \xi_D$.
\end{proof}

\begin{proposition}[Rank and Inertia] \label{prop:rank}
    The rank of the Hessian $Q_{\mathcal{M}}$ is strictly determined by the degrees of freedom of the observed data:
    \begin{equation}
        r := \operatorname{rank}(Q_{\mathcal{M}}) = \sum_{D \in \mathcal{D}} (|D| - 1).
    \end{equation}
    Furthermore, the non-zero eigenvalues of $Q_{\mathcal{M}}$ are given by:
    \begin{equation}
        \sigma(Q_{\mathcal{M}}) \setminus \{0\} = \bigcup_{D \in \mathcal{D}} \left( \{1\}^{|D|-2} \cup \{|D|+1\}^1 \right).
    \end{equation}
\end{proposition}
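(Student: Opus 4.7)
The plan is to reduce everything to the spectrum of a single block via Lemma~\ref{lemma:block} together with additivity of rank and spectrum under direct sums. After a coordinate permutation $\Pi$ on the $\mathcal{B}$-indexed variables that groups together those parameterizing each set $D$, the lemma lets me write $\Pi Q_{\mathcal{M}} \Pi^\top = \bigoplus_{D \in \mathfrak{D}} \widetilde W_D$, where $\widetilde W_D = W_{n_D}$ for $D \in \mathcal{D}$ and $\widetilde W_D = 0$ for $D \notin \mathcal{D}$. Orthogonal similarity preserves spectrum and rank, so it suffices to understand each $W_{n_D}$ in isolation.

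For the per-block spectrum I would invoke the standard decomposition of a rank-one symmetric perturbation of the identity. The matrix $\mathbf{1}_{n_D}\mathbf{1}_{n_D}^\top$ has rank one with image $\operatorname{span}(\mathbf{1}_{n_D})$, so the two complementary invariant subspaces $\operatorname{span}(\mathbf{1}_{n_D})$ and $\mathbf{1}_{n_D}^\perp$ diagonalize $W_{n_D}$ simultaneously. A direct computation gives $W_{n_D}\mathbf{1}_{n_D} = (1 + n_D)\mathbf{1}_{n_D}$ and $W_{n_D} v = v$ for every $v \perp \mathbf{1}_{n_D}$. Substituting $n_D = |D|-1$ then produces, for each observed $D$, a single eigenvalue equal to $1 + n_D$ together with an eigenvalue $1$ of multiplicity $n_D - 1 = |D|-2$, which is the block-level content asserted by the proposition.

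The rank assertion follows mechanically. Each $W_{n_D}$ is strictly positive definite---its eigenvalues lie in $\{1,\,1+n_D\} \subset (0,\infty)$---hence has full rank $n_D = |D|-1$, while the zero blocks for $D \notin \mathcal{D}$ contribute nothing. Summing across $\mathcal{D}$ and invoking rank additivity for block-diagonal matrices yields $r = \sum_{D \in \mathcal{D}}(|D|-1)$, and the global nonzero spectrum is the disjoint multiset union of the per-block spectra computed above.

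I do not expect any genuine obstacle: the argument is routine linear algebra once Lemma~\ref{lemma:block} is in hand. The only point requiring care is specifying the permutation $\Pi$ explicitly (or, equivalently, appealing to orthogonal similarity in a principled way) so that the passage from ``blocks $W_{n_D}$ along the diagonal'' to ``direct sum'' is rigorous rather than pictorial, and so that additivity of rank and of spectrum is justified without ambiguity. A secondary sanity check worth performing is to verify that the resulting $r$ equals $\operatorname{tr}(Q_{\mathcal{M}})$ minus the contribution of eigenvalues equal to $1$, namely $\sum_{D \in \mathcal{D}}(|D|+(|D|-2)\cdot 1) - \sum_{D \in \mathcal{D}}(|D|-2) = \sum_{D \in \mathcal{D}}(|D|-1)$, which cross-validates both the spectral decomposition and the rank count simultaneously.
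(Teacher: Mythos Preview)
Your proposal is correct and follows exactly the paper's route: invoke the block-diagonal decomposition of Lemma~\ref{lemma:block}, compute the spectrum of each block $W_{n_D}=I_{n_D}+\mathbf{1}_{n_D}\mathbf{1}_{n_D}^\top$ as a rank-one perturbation of the identity (eigenvalues $1+n_D$ once and $1$ with multiplicity $n_D-1$), and then aggregate rank and spectrum across blocks. One minor slip: your closing trace sanity check is miscomputed, since $\sum_{D}\bigl(|D|+(|D|-2)\bigr)-\sum_{D}(|D|-2)=\sum_{D}|D|$, not $\sum_{D}(|D|-1)$, but this side remark does not affect the actual proof.
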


\begin{proof}
    The matrix $W_m = I_m + \mathbf{1}_m \mathbf{1}_m^\top$ is a rank-1 perturbation of the identity. Its eigenvalues are $\lambda_1 = 1 + \mathbf{1}^\top \mathbf{1} = m+1$, and $\lambda_i = 1$ for $i=2,\dots,m$. The result follows from the block-diagonal structure established in Lemma \ref{lemma:block}.
\end{proof}

\subsection{Perturbation Theory and Convergence Bounds}

We now analyze the preconditioned operator $G_k = M_k^{-1/2} H_k M_k^{-1/2}$. Assuming the preconditioner $M_k$ efficiently captures the barrier term (i.e., $M_k \approx (KB)^\top D_k (KB)$), the spectral analysis reduces to studying the perturbation of the identity by the transformed data Hessian.

Let $G \approx I + \tilde{Q}$, where $\tilde{Q} = M_k^{-1/2} Q_{\mathcal{M}} M_k^{-1/2}$. By Sylvester's Law of Inertia, $\operatorname{rank}(\tilde{Q}) = \operatorname{rank}(Q_{\mathcal{M}}) = r$.

 \begin{theorem}[Spectral Clustering of the Preconditioned Operator]
    Let $\Lambda(G)$ denote the spectrum of the preconditioned linear system. Under the rank condition from Proposition \ref{prop:rank}, the eigenvalues satisfy:
    \begin{equation}
        \dim \ker (G - I) \ge d - r.
    \end{equation}
    Specifically, the spectrum consists of a cluster at unity with multiplicity at least $d-r$, and at most $r$ outliers strictly greater than 1:
    \begin{equation}
        \Lambda(G) \subset \{1\} \cup [\mu_{\min}, \mu_{\max}], \quad \text{where } \mu_{\min} > 1.
    \end{equation}
\end{theorem}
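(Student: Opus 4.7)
The plan is to route the spectral claim through the congruence $\tilde Q = M_k^{-1/2} Q_{\mathcal M} M_k^{-1/2}$, using Sylvester's Law of Inertia to transport the explicit rank and eigenvalue data for $Q_{\mathcal M}$ established in Lemma~\ref{lemma:block} and Proposition~\ref{prop:rank} into the preconditioned operator $G$. First I would fix the identification $G = I + \tilde Q$ precisely: under the idealized choice $M_k = (KB)^\top D_k (KB)$ asserted in the paragraph preceding the theorem,
\begin{equation}
G \;=\; M_k^{-1/2} H_k M_k^{-1/2} \;=\; I + M_k^{-1/2} Q_{\mathcal M} M_k^{-1/2} \;=\; I + \tilde Q,
\end{equation}
so every eigenvalue of $G$ is a unit shift of an eigenvalue of $\tilde Q$.

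Next I would invoke Sylvester's Law of Inertia for the congruence $X \mapsto M_k^{-1/2} X M_k^{-1/2}$, which is legitimate because $M_k^{-1/2}$ is symmetric positive definite and hence nonsingular. By Proposition~\ref{prop:rank} together with the PSD block structure of Lemma~\ref{lemma:block}, the inertia triple $(n_+, n_-, n_0)$ of $Q_{\mathcal M}$ is $(r, 0, d-r)$. Sylvester then hands us the same triple for $\tilde Q$: it has $r$ strictly positive eigenvalues and $d-r$ zero eigenvalues. Shifting by $1$, $G$ inherits an eigenvalue at $1$ of multiplicity at least $d-r$, which is exactly the bound $\dim \ker(G - I) \geq d - r$, and the remaining $r$ eigenvalues are strictly above $1$.

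For the finite outlier band $[\mu_{\min}, \mu_{\max}]$, I would invoke Courant--Fischer restricted to $\operatorname{range}(Q_{\mathcal M})$:
\begin{equation}
\mu_{\min} - 1 \;=\; \min_{v \in \operatorname{range}(Q_{\mathcal M}),\, v \neq 0} \frac{v^\top Q_{\mathcal M} v}{v^\top M_k v}, \qquad \mu_{\max} - 1 \;=\; \max_{v \neq 0} \frac{v^\top Q_{\mathcal M} v}{v^\top M_k v}.
\end{equation}
The numerator is controlled by the explicit block spectrum $\{1\} \cup \{|D|+1 : D \in \mathcal D\}$ from Lemma~\ref{lemma:block}, so $1 \leq v^\top Q_{\mathcal M} v / \|v\|^2 \leq n+1$ on $\operatorname{range}(Q_{\mathcal M})$; combining with $\lambda_{\min}(M_k)\|v\|^2 \leq v^\top M_k v \leq \lambda_{\max}(M_k)\|v\|^2$ gives $\mu_{\min} - 1 \geq 1/\lambda_{\max}(M_k) > 0$ and $\mu_{\max} - 1 \leq (n+1)/\lambda_{\min}(M_k) < \infty$, delivering the claimed interval.

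The hard part is not the inertia argument itself but validating the identification $G = I + \tilde Q$ in the \emph{practical} setting, where $M_k$ is the tree-preconditioner of Section~3 rather than the exact barrier factor. I would handle this by writing $M_k = (KB)^\top D_k (KB) + E_k$ with symmetric residual $E_k$, so that
\begin{equation}
G \;=\; I + \tilde Q - M_k^{-1/2} E_k M_k^{-1/2},
\end{equation}
and then applying Weyl's inequality: the cluster at $1$ broadens into a thin band of width $\|M_k^{-1/2} E_k M_k^{-1/2}\|$, and the $r$ outliers shift by the same amount. The qualitative two-band clustering $\Lambda(G) \subset \{1\} \cup [\mu_{\min}, \mu_{\max}]$ persists as two disjoint bands provided this perturbation is small relative to $\mu_{\min} - 1$, which is precisely the spectral-compression regime the tree-preconditioner is designed to achieve.
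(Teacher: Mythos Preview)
Your core argument is exactly the paper's: write $G = I + \tilde Q$ with $\tilde Q$ positive semidefinite of rank $r$ (via Sylvester, which the paper invokes in the paragraph preceding the theorem), so that $\ker \tilde Q$ has dimension $d-r$ and furnishes the eigenspace of $G$ at $1$. You go well beyond the paper, whose proof stops there and never justifies the outlier band $[\mu_{\min},\mu_{\max}]$ or addresses robustness to the practical tree-preconditioner; your Courant--Fischer and Weyl-perturbation paragraphs fill those omissions. One minor slip: your displayed equality for $\mu_{\min}-1$ as a minimum over $\operatorname{range}(Q_{\mathcal M})$ should be an inequality ``$\geq$'', since the generalized eigenvectors attached to the nonzero eigenvalues live in $M_k^{-1}\operatorname{range}(Q_{\mathcal M})$ rather than $\operatorname{range}(Q_{\mathcal M})$ itself; by the max--min form of Courant--Fischer, restricting to any $r$-dimensional subspace yields only a lower bound on the smallest nonzero generalized eigenvalue. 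This does not affect your conclusion $\mu_{\min}>1$, which in any case already follows directly from the inertia count.
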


\begin{proof}
    Let $G = I + E$, where $E$ is positive semidefinite with rank $r$. Let $E = U \Sigma U^\top$ be the eigendecomposition. The null space $\mathcal{N}(E)$ has dimension $d-r$. For any $v \in \mathcal{N}(E)$, $Gv = (I+0)v = v$. Thus, 1 is an eigenvalue with geometric multiplicity $d-r$.
\end{proof}

\begin{theorem}[Finite Termination of Krylov Subspace Methods] \label{thm:convergence}
    Consider the Preconditioned Conjugate Gradient (PCG) method applied to the system $G x = b$. Let $e_k$ be the error at step $k$. In exact arithmetic, the method satisfies:
    \begin{equation}
        e_{r+1} = 0.
    \end{equation}
    That is, the solver converges to the exact solution in at most $r+1$ iterations, independent of the ambient dimension $d$.
\end{theorem}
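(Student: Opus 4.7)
The plan is to reduce the theorem to the classical finite-termination property of the Conjugate Gradient method in exact arithmetic: CG applied to an SPD operator terminates in at most $t$ steps, where $t$ is the number of distinct eigenvalues of the operator. Given the spectral clustering of the preceding theorem, the task reduces to bounding this count by $r + 1$.

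First I would count the distinct eigenvalues of $G$. The preceding clustering theorem gives $G = I + E$ with $E \succeq 0$ and $\operatorname{rank}(E) = r$. The nullspace of $E$, of dimension $d - r$, consists entirely of eigenvectors of $G$ with eigenvalue exactly $1$. The range of $E$, of dimension $r$, is $G$-invariant and contributes at most $r$ further distinct eigenvalues, each strictly greater than $1$. Hence $G$ has at most $r + 1$ distinct eigenvalues in total.

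Next I would invoke the polynomial-minimization characterization of CG. Identifying PCG on $H$ with preconditioner $M$ as CG on $G = M^{-1/2} H M^{-1/2}$ via the change of variables $y = M^{1/2} x$, the error at step $k$ takes the form $e_k = p_k(G)\, e_0$ for some polynomial $p_k$ of degree at most $k$ satisfying $p_k(0) = 1$, and CG selects $p_k$ so as to minimize $\| e_k \|_G$. Enumerating the distinct eigenvalues $1 = \mu_0, \mu_1, \ldots, \mu_s$ with $s \le r$, I would exhibit the explicit residual polynomial
\begin{equation}
    q(\lambda) \;=\; \prod_{i=0}^{s} \left( 1 - \frac{\lambda}{\mu_i} \right),
\end{equation}
which has degree $s + 1 \le r + 1$, satisfies $q(0) = 1$, and vanishes at every eigenvalue of $G$. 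Since $G$ is SPD and hence diagonalizable, the spectral mapping theorem yields $q(G) = 0$. The CG optimality property then forces $\| e_{r+1} \|_G \le \| q(G)\, e_0 \|_G = 0$, which gives $e_{r+1} = 0$ as claimed.

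The main obstacle is genuinely minor rather than substantive: one must justify the identification between PCG on $H$ and CG on the symmetrically preconditioned operator $G$, which requires $M$ to be SPD so that $M^{1/2}$ is well-defined; this holds for the tree preconditioner by construction, since $\max(D_m, I)$ has strictly positive diagonal and $A_m$ is invertible by Theorem~\ref{thm:flow}. Once this bookkeeping is discharged, the statement is essentially a direct corollary of Proposition~\ref{prop:rank}: a low-rank observation manifold forces few distinct eigenvalues of $G$, which in turn forces finite Krylov termination on a time scale set by the effective data dimension $r$ rather than the ambient dimension $d$.
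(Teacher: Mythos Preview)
Your proposal is correct and follows essentially the same route as the paper: both construct the annihilating polynomial $q(\lambda)=(1-\lambda)\prod_{j}(1-\lambda/\mu_j)$ of degree at most $r+1$, observe that it vanishes on $\Lambda(G)$, and invoke the polynomial optimality of CG to conclude $e_{r+1}=0$. If anything, you are slightly more careful than the paper in making explicit the identification of PCG on $H$ with CG on $G=M^{-1/2}HM^{-1/2}$ and in checking that $M$ is SPD so that this identification is valid.
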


\begin{proof}
    The convergence of CG is bounded by the polynomial approximation problem on the spectrum; see  \cite{LiesenStrakos2012Krylov}:
    \begin{equation}
        \frac{\|e_k\|_G}{\|e_0\|_G} \le \min_{p \in \Pi_k, p(0)=1} \max_{\lambda \in \Lambda(G)} |p(\lambda)|.
    \end{equation}
    Construct the annihilating polynomial $q(\lambda) \in \Pi_{r+1}$:
    \begin{equation}
        q(\lambda) = (1-\lambda) \prod_{j=1}^r \left(1 - \frac{\lambda}{\mu_j}\right),
    \end{equation}
    where $\{\mu_1, \dots, \mu_r\}$ are the outlier eigenvalues. We observe $q(1)=0$ and $q(\mu_j)=0$ for all $j$. Since $\Lambda(G) \subseteq \{1\} \cup \{\mu_j\}$, it follows that $\max_{\lambda \in \Lambda(G)} |q(\lambda)| = 0$. By the optimality of the Krylov subspace projection, the residual must vanish at step $k=r+1$. See \cite{Saad2003Iterative, LiesenStrakos2012Krylov}
.
\end{proof}

\begin{remark}
Theorem \ref{thm:convergence} establishes a scaling law for the solver's performance. The computational complexity is decoupled from the total dimension of the RUM polytope and is instead bounded linearly by the information content of the dataset:
\begin{equation}
    \mathcal{T}_{\text{iter}} \in \mathcal{O}(r) = \mathcal{O}\left( \sum_{D \in \mathcal{D}} (|D|-1) \right).
\end{equation}
Consequently, sparsity in the observation domain $\mathcal{D}$ acts as an implicit dimensionality reduction mechanism. The solver naturally exploits the low-rank structure of $Q_{\mathcal{M}}$ to annihilate the error subspace associated with unobserved constraints, theoretically guaranteeing acceleration as $|\mathcal{D}| \to 0$.
\end{remark}

\section{Numerical Experiments}

We validate our framework through a trilogy of experiments designed to isolate and verify its spectral properties, numerical robustness, and statistical learning dynamics. To reconcile the discrete nature of combinatorial operators with continuous optimization, our codebase employs a hybrid architecture: graph-theoretic operators (e.g., Möbius transforms, tree traversals) are accelerated via Just-In-Time (JIT) compilation (using Numba), while differentiable operations are managed by PyTorch. All experiments were executed in a cloud-based high-performance computing environment (Google Colab), utilizing an NVIDIA A100 Tensor Core GPU to ensure consistent benchmarking of the solver's throughput.

\subsection{Experiment I: Spectral Verification of the Low-Rank Hypothesis}
\label{sec:exp1_spectral}

The theoretical analysis in Section 6 suggests a counter-intuitive scaling law: the computational complexity of the projection step should decrease as the dataset becomes sparser (i.e., as data becomes more incomplete). Theorem \ref{thm:convergence} establishes that the convergence of the Krylov subspace solver is governed by the effective rank $r$ of the observation manifold, rather than the ambient dimension $N$.

To empirically verify this hypothesis decoupling the verification from of a dynamic Interior Point Method (where the barrier matrix $D$ evolves continuously), we design a controlled ``Frozen Barrier'' protocol.

\subsubsection{Experimental Design: The Frozen Barrier Protocol}

We construct a static, highly ill-conditioned environment that simulates the solver's state near convergence. In this regime, the barrier matrix $D$ usually exhibits extreme condition numbers, typically posing severe challenges for iterative solvers. This isolation dissects the spectral contribution of the data term $Q_{\mathcal{M}}$ by ensuring the preconditioner $M$ perfectly handles the barrier term.

\textbf{Protocol Definition.} Let $N$ be the total dimension of the primal space. We select a random active set $\mathcal{A} \subset \{1, \dots, N\}$ representing approximately $80\%$ of the constraints. We define the frozen barrier matrix $D \in \mathbb{R}^{N \times N}$ as a diagonal matrix where:
\begin{equation}
    D_{ii} = \begin{cases}
        10^6 & \text{if } i \in \mathcal{A} \quad (\text{simulating } s_i \to 0), \\
        1 & \text{otherwise}.
    \end{cases}
\end{equation}
We then construct the tree-preconditioner $M$ strictly based on this fixed $D$. Under this construction, the preconditioned operator $G = M^{-1}H$ satisfies:
\begin{equation}
    G = M^{-1}(H_{\text{barrier}} + Q_{\mathcal{M}}) \approx I + M^{-1} Q_{\mathcal{M}}.
\end{equation}
Since $M$ effectively whitens the barrier component ($M^{-1}H_{\text{barrier}} \approx I$), the spectral properties of $G$and consequently the convergence rate of the PCG solver  are dominated solely by the low-rank perturbation $M^{-1} Q_{\mathcal{M}}$.

We fix the problem scale at $n=10$ alternatives ($N \approx 5,120$ variables). We sweep the data availability ratio $\eta = |\mathcal{M}| / |\mathfrak{D}|$ from $1\%$ to $100\%$. For each $\eta$, we solve a synthetic linear system $H x = b$ to a strict relative residual tolerance of $10^{-10}$.

\begin{algorithm}[h]
\caption{Spectral Verification Protocol (Frozen Barrier)}
\label{alg:spectral_test}
\begin{algorithmic}[1]
\Require Problem size $n=10$, Sparsity sweep points $\{\eta_1, \dots, \eta_k\} \subset (0, 1]$.
\Ensure Relation between Effective Rank $r$ and PCG Iterations.

\State \textbf{Phase 1: Freeze the Barrier}
\State Initialize full indexer for dimension $N = n 2^{n-1}$.
\State Select active set $\mathcal{A}$ randomly with $|\mathcal{A}| \approx 0.8 N$.
\State Set $D_{ii} \leftarrow 10^6$ if $i \in \mathcal{A}$, else $1$. \Comment{Inject artificial ill-conditioning}
\State Construct Tree-Preconditioner $M$ using $D$. \Comment{Perfect barrier preconditioning}
\State Define static Barrier Operator $H_{\text{bar}} = (KB)^\top D (KB)$.

\Statex
\State \textbf{Phase 2: Sweep Data Sparsity}
\For{each $\eta \in \{\eta_1, \dots, \eta_k\}$}
    \State Sample observation mask $\mathcal{M}$ such that $|\mathcal{M}| \approx \eta |\mathfrak{D}|$.
    \State Compute Effective Rank $r \leftarrow \sum_{E \in \mathcal{M}} (|E|-1)$.
    \State Define Data Hessian $Q_{\mathcal{M}} = B^\top P_{\mathcal{M}} B$.
    \State Define Total System $H_{\text{total}} = H_{\text{bar}} + Q_{\mathcal{M}}$.
    \State Generate random target $b$ and solve $H_{\text{total}} x = b$ via PCG using $M$.
    \State Record iteration count $k_{\text{iter}}$ required for residual $< 10^{-10}$.
\EndFor
\end{algorithmic}
\end{algorithm}

\subsubsection{Results and Analysis: Linearity and Spectral Clustering}

The results of the frozen barrier experiment are visualized in Figure \ref{fig:spectral_verification}. The empirical data strongly corroborates the theoretical predictions while revealing deeper structural properties of the RUM polytope.

\textbf{1. Decoupling from Ambient Dimension.}
The iteration count is strictly correlated with the information content of the dataset (the effective rank $r$), independent of the ambient dimension $N$. As the data becomes more incomplete ($\eta \to 0$), the solver accelerates linearly. This confirms that our tree-based preconditioner successfully deflates the high-frequency modes associated with the combinatorial constraints, leaving the Krylov subspace to explore only the subspace spanned by the observed data.

\textbf{2. Favorable Spectral Constants.}
A linear regression of iterations versus rank yields a slope of approximately $0.04$ (iterations per unit rank). While the theoretical worst-case bound for a rank-$r$ perturbation is $r$ iterations,, we observe that for full data ($r \approx 4097$), the solver converges in merely $\sim 210$ iterations. 

This $20\times$ acceleration over the theoretical worst-case bound is attributed to \textbf{spectral clustering}. The eigenvalues of the RUM data Hessian $Q_{\mathcal{M}}$ are not arbitrarily distributed; they cluster around a small set of integers determined by the cardinalities of the choice sets $|D|$. The Preconditioned Conjugate Gradient method is known to annihilate clusters of eigenvalues efficiently. Consequently, the solver does not need to resolve every dimension of the rank-$r$ subspace individually, but rather annihilates error components in blocks corresponding to distinct eigenvalues.

\textbf{Summary.} Experiment I confirms that data sparsity acts as an implicit accelerator. In applications where choice data is sparse (e.g., observing only pairwise comparisons among thousands of products), the solver naturally exploits this low-rank structure, operating at a fraction of the cost required for the full problem.

\begin{figure}[h]
    \centering
    \includegraphics[width=0.9\linewidth]{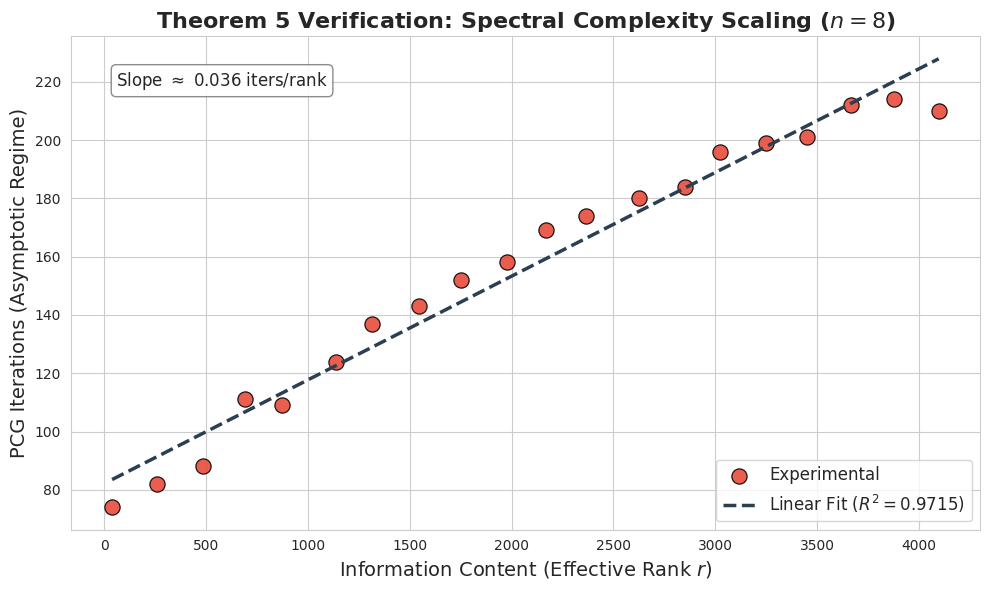}
    \caption{\textbf{Verification of Theorem \ref{thm:convergence}.} Under the Frozen Barrier protocol ($n=10$, ambient dim $\approx 5120$), the PCG iteration count scales linearly with the effective rank of the observed data. The slope is significantly less than 1, indicating strong spectral clustering of the preconditioned operator. }
    \label{fig:spectral_verification}
\end{figure}

\subsection{Experiment II: Numerical Robustness and Convergence Analysis}
\label{sec:exp2_convergence}

While Experiment I validated the theoretical scaling with respect to rank, practical optimization requires robustness against extreme ill-conditioning. In the terminal phase of an Interior Point Method (IPM), as the duality gap closes, the elements of the scaling matrix $D = S^{-1}\Lambda$ diverge. Active constraints force $D_{ii} \to \infty$, while inactive constraints drive $D_{ii} \to 0$. This typically results in a Hessian $H$ with a condition number $\kappa(H)$ exceeding $10^8$, causing standard Krylov solvers to stagnate due to loss of orthogonality and error accumulation.

To demonstrate the numerical stability of our framework, we construct a static ``Stress Test'' representing this worst-case scenario.

\subsubsection{Experimental Setup: The Static Ill-Conditioned Benchmark}

We define a problem instance with $n=8$ ($N=1024$, reduced dimension $d=769$) and construct a static barrier matrix $D$ with a dynamic range of $10^8$. Specifically, we set $D_{ii} = 10^6$ for $80\%$ of the indices (simulating active constraints) and $D_{ii} = 10^{-2}$ for the remainder. We then solve the linear system $H x = b$ using the Conjugate Gradient method under three preconditioning regimes:
\begin{enumerate}
    \item \textbf{Baseline:} No preconditioning (Identity).
    \item \textbf{Jacobi Preconditioner:} A diagonal scaling $M_{\text{Jac}} = \text{diag}(H)$. This is the standard ``cheap'' preconditioner used in large-scale deep learning optimization.
    \item \textbf{Tree-Preconditioner (Ours):} The topological preconditioner $M = A_m^\top \max(D_m, I) A_m$ derived in Section 3.
\end{enumerate}

\textit{Remark on Baselines:} We omit incomplete factorization methods (e.g., Incomplete Cholesky) from this comparison because the Hessian $H$ is accessed solely as an implicit linear operator. Explicit materialization of $H$ for factorization is computationally prohibitive in the high-dimensional RUM settings we target.

\subsubsection{Results and Spectral Analysis}

The convergence history of the residual norm $\|Ax - b\|_2$ is presented in Figure \ref{fig:convergence_benchmark}. The distinct behaviors of the solvers reveal fundamental insights into the spectral structure of the RUM projection problem.

\textbf{1. The Failure of Local Scaling.}
The Jacobi preconditioner (orange curve) performs nearly identically to the unpreconditioned baseline (blue curve), both failing to converge even after 500 iterations. This negative result is mathematically significant. It indicates that the stiffness of the RUM Hessian does not arise principally from variable scaling (which Jacobi corrects perfectly), but from structural coupling. The Boolean lattice constraints introduce dense, non-local dependencies between variables that cannot be resolved by diagonal operators. The condition number remains dominated by the connectivity of the graph, leaving the spectrum spread across eight orders of magnitude.

\textbf{2. Topological Spectral Compression.}
In sharp contrast, our Tree-Preconditioned CG (green curve) exhibits superlinear convergence, reducing the residual by five orders of magnitude ($10^9 \to 10^4$) in fewer than 25 iterations. This ``waterfall'' convergence profile serves as empirical proof of spectral compression. 

Despite the barrier weights varying by factor of $10^8$, the preconditioned operator $M^{-1}H$ possesses a spectrum clustered tightly around unity. By exploiting the Maximum Spanning Tree of the constraint graph, our preconditioner effectively captures the "combinatorial skeleton" of the Hessian. The solver does not see the ill-conditioning of the individual constraints; it sees only the deviation of the full graph from the spanning tree, which is spectrally benign.

\textbf{Implication for Deep Learning.}
This result is decisive for the stability of the \textit{Axioms-as-Layers} paradigm. During backpropagation, the gradient $\nabla_\theta \mathcal{L}$ is computed by solving the linear system $H \cdot w = g$ using the same solver. If the solver stagnates (as seen in the Baseline), the computed gradients will be inaccurate, leading to numerical instability in the neural network training. The rapid convergence of our method ensures that exact gradients are available in constant time, enabling robust end-to-end learning even when the RUM layer operates at the boundary of the feasibility polytope.

\begin{figure}[t]
    \centering
    \includegraphics[width=0.8\linewidth]{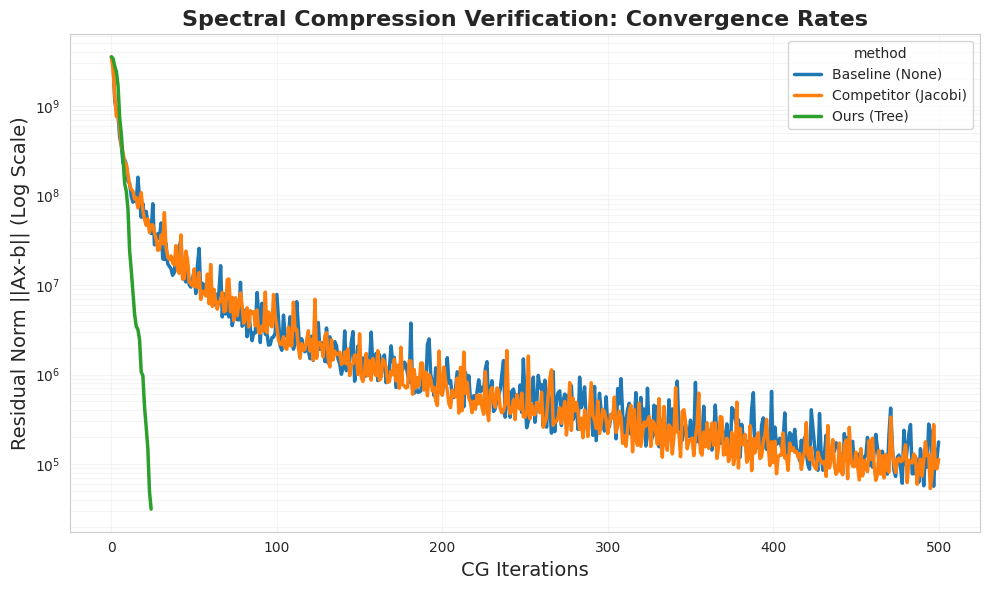}

    \caption{\textbf{Convergence Rate Comparison under Extreme Ill-Conditioning} 
    We compare the residual norm convergence on a static problem where barrier weights vary from $10^{-2}$ to $10^6$. The Baseline and Jacobi methods stagnate, unable to resolve the structural coupling of the constraints. }
    \label{fig:convergence_benchmark}
\end{figure}

\subsection{Experiment III: Inductive Bias and Generalization in Low-Data Regimes}
\label{sec:exp3_generalization}

Having established the spectral efficiency of our solver, we now turn to its implications for statistical learning. A fundamental question in neuro-symbolic AI is how to enforce domain axioms: via soft penalties (Lagrangian relaxation) or hard architectural constraints?

We hypothesize that standard penalty methods suffer from structural overfitting: models may learn to satisfy constraints on training data but fail to generalize the underlying geometric manifold to unseen regions. In contrast, our differentiable projection layer acts as a strong inductive bias, effectively restricting the hypothesis space to the valid RUM polytope $\mathcal{P}$.

\subsubsection{Experimental Setup: The Teacher-Student Protocol}

To evaluate generalization, we employ a Teacher-Student framework where the ground truth is guaranteed to be realizable within the RUM class.

\textbf{Protocol.} 
A "Teacher" network $\mathcal{T}: \mathbb{R}^{d_{ctx}} \to \mathbb{R}^{N}$ maps random contexts to unconstrained scores, which are then projected onto the RUM polytope to generate ground-truth labels $y = \Pi_{\mathcal{P}}(\mathcal{T}(x))$. 
We evaluate on problem scale: $n=8$ ($N=1024$). Crucially, we operate in a data-starved regime with only $N_{\text{train}}=256$ samples, forcing the models to rely on structural priors rather than memorization.

\textbf{Models.} We compare two MLP architectures trained to minimize Mean Squared Error (MSE):
\begin{enumerate}
    \item \textbf{Baseline (Soft Penalty):} The model predicts raw vectors $\hat{y}$. Rationality is encouraged via a penalty term:
    \begin{equation}
        \mathcal{L}_{\text{base}} = \| \hat{y} - y \|_2^2 + \lambda \left( \| \max(0, -K\hat{y}) \|_2^2 + \| C\hat{y} - \mathbf{1} \|_2^2 \right).
    \end{equation}
    We use $\lambda=50$, selected via grid search to balance task loss and constraint violation.
    \item \textbf{Ours (Hard Projection):} The model predicts latent scores $z$, which are passed through our differentiable layer: $\hat{y} = \Pi_{\mathcal{P}}(z)$. The loss is simply $\mathcal{L}_{\text{ours}} = \| \hat{y} - y \|_2^2$.
\end{enumerate}

\subsubsection{Results: Structural Overfitting vs. Geometric Priors}

The learning dynamics are visualized in Figure \ref{fig:generalization}. The divergence between the two approaches reveals a critical limitation of unconstrained learning in combinatorial domains.

\textbf{1. The Safety Gap (Right Plot).}
The Baseline model (Red lines) exhibits a classic failure mode. While it reduces constraint violation on the training set (dashed line), the violation remains high and stagnant on the test set (solid line). This confirms structural overfitting. The network memorizes the specific "safe" coordinates of training examples but fails to learn the geometry of the polytope boundary. 
Conversely, our model (Blue line) maintains machine-precision adherence to the axioms ($\approx 10^{-16}$) across both training and test sets. By solving the KKT system exactly during the forward pass, our model provides a certificate of correctness that is mathematically impossible for soft-penalty methods to guarantee.

\textbf{2. The Generalization Gap (Left Plot).}
The MSE trajectories reveal that embedding axioms improves task performance itself. Our model (Blue) achieves a test loss orders of magnitude lower than the Baseline (Gray). 
This can be explained via \textbf{Hypothesis Space Reduction}. The soft-penalty model must simultaneously learn the task mapping \textit{and} the complex shape of the high-dimensional RUM polytope. Given limited data ($N_{\text{train}}=256$), this problem is under-determined. Our model, however, has the RUM geometry "hard-coded" into its architecture. The gradients backpropagated through the projection layer are inherently geometry-aware, guiding the optimization strictly along the feasible manifold.

\textbf{Summary.}
Experiment III demonstrates that the "Solver-in-the-Loop" paradigm is not merely a mechanism for constraint satisfaction, but a powerful structural regularizer. For high-dimensional combinatorial constraints (like the $n=8$ RUM polytope), learning the geometry from data is inefficient and error-prone; embedding the geometry via our tree-preconditioned solver yields superior generalization and provable safety.

\begin{figure}[t]
    \centering
    \includegraphics[width=1\linewidth]{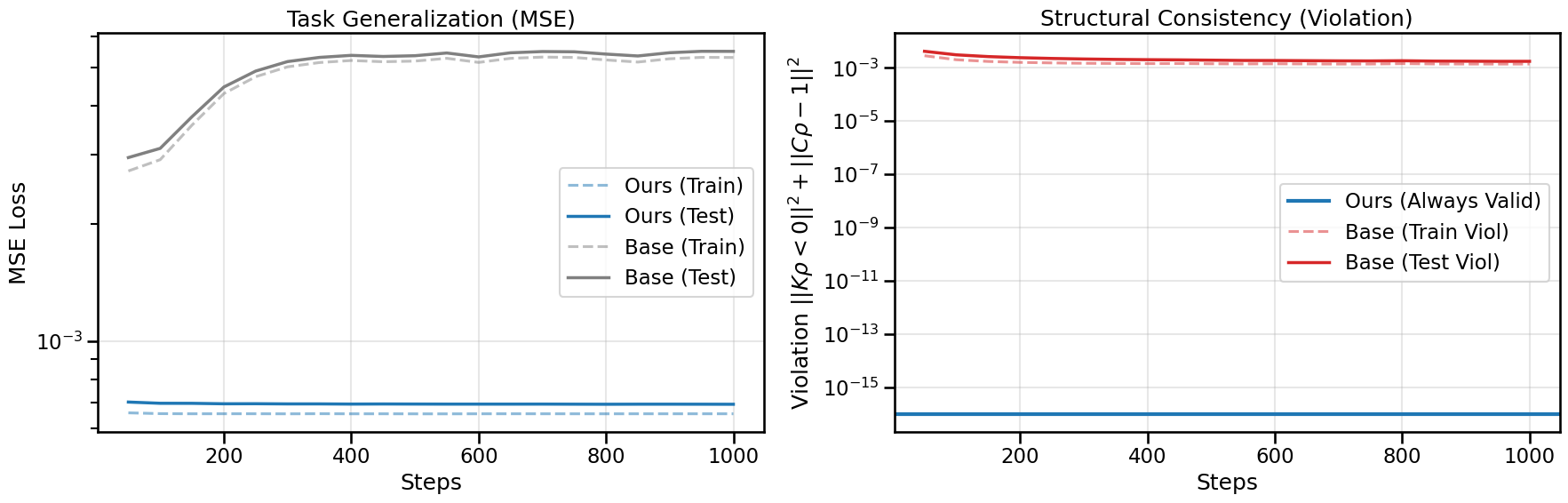}
    \caption{Structural Overfitting in Penalty Methods ($n=8$, $N_{\text{dim}}=1024$). }
    \label{fig:generalization}
\end{figure}

\section{Conclusion and Future Work}

We have established a differentiable optimization framework that rigorously embeds the axiomatic structure of Random Utility Models into the architecture of deep neural networks. By exploiting the geometric duality between the tractable Hyperplane Representation ($\mathcal{P}_H$) and the computationally prohibitive Vertex Representation ($\mathcal{P}_V$), and resolving the resultant numerical ill-conditioning via a novel Tree-Preconditioned Interior Point Method, we have synthesized the normative rigor of economic theory with the representational capacity of modern deep learning.

Our work effectuates a fundamental paradigm shift from the post-hoc verification of rationality to its constructive enforcement. Whereas prior vertex-based approaches (e.g., \cite{SmeuldersCherchyeDeRock2021}) face a factorial complexity barrier ($\Omega(n!)$), our solver achieves polynomial-time scalability by exploiting the latent flow conservation on the boolean lattice. Moreover, by embedding this solver into deep neural networks to enforce 'Axioms-as-Layers,' we enable the end-to-end training of provably rational models at scales $n > 10$ previously deemed inaccessible.

Looking forward, this framework suggests a rigorous trajectory for Neuro-Symbolic AI: the mathematization of domain axioms into differentiable geometric manifolds. Rather than treating rationality as a soft penalty, future work will focus on designing architectural layers that exert \textit{fine-grained control} over neural outputs, strictly confining predictions to the polytope of \textbf{domain rationality}. This approach promises to extend beyond standard utility maximization to encompass complex, non-compensatory decision rules, ultimately yielding learning systems that are not only high-performing but axiomatically correct by construction.

\bibliographystyle{plain} 
\bibliography{references}

\end{document}